\newtheorem{proposition}{Proposition}
\newcommand{\node}{N}
\newcommand{\graph}{G}
\newcommand{\vars}{\bm{X}}
\newcommand{\dataset}{\mathcal{D}}
\newcommand{\order}{\sigma}
\newcommand{\orderless}{<}
\newcommand{\isdag}[1]{\text{DAG}(#1)}
\newcommand{\prior}{p_{pr}}
\newcommand{\likelihood}{p_{lh}}
\newcommand{\posterior}{p}
\newcommand{\weights}{B}
\newcommand{\noisevariance}{\Sigma}
\newcommand{\bias}{\bm{b}}
\newcommand{\causaleffect}[2]{E_{#1#2}}
\newcommand{\bayesiance}[2]{\text{BCE}(#1, #2)}
\newcommand{\spnweight}{\phi}
\newcommand{\trust}{\textsc{Trust}}
\newcommand{\trustd}{\textsc{Trust-d}}
\newcommand{\trustg}{\textsc{Trust-g}}
\newcommand{\dibs}{\textsc{Dibs}}
\newcommand{\gadget}{\textsc{Gadget}}
\newcommand{\graphp}{p_{G}}
\newcommand{\graphpi}[1]{p_{G_#1}}
\newcommand{\oracle}{\mathcal{O}}
\newcommand{\bnvariables}{\{1, ..., d\}}
\newcommand{\hlc}[2][yellow]{{%
    \colorlet{foo}{#1}%
    \sethlcolor{foo}\hl{#2}}%
}
\icmltitlerunning{Tractable Uncertainty for Structure Learning}
\begin{document}

\twocolumn[
\icmltitle{Tractable Uncertainty for Structure Learning}

\icmlsetsymbol{equal}{*}

\begin{icmlauthorlist}
\icmlauthor{Benjie Wang}{ox}
\icmlauthor{Matthew Wicker}{ox}
\icmlauthor{Marta Kwiatkowska}{ox}
\end{icmlauthorlist}

\icmlaffiliation{ox}{Department of Computer Science, University of Oxford, Oxford, United Kingdom}

\icmlcorrespondingauthor{Benjie Wang}{benjie.wang@cs.ox.ac.uk}

\icmlkeywords{Machine Learning, ICML, Causality, Structure Learning, Tractable Models, Causal Discovery, Probabilistic Circuits}

\vskip 0.3in
]

\printAffiliationsAndNotice{}

\begin{abstract} 
    Bayesian structure learning allows one to capture uncertainty over the causal directed acyclic graph (DAG) responsible for generating
    given data.
    In this work, we present Tractable Uncertainty for STructure learning (\trust), a framework for approximate posterior inference that relies on probabilistic circuits as the representation of our posterior belief. In contrast to sample-based posterior approximations, our representation can capture a much richer space of DAGs, while also being able to tractably reason about the uncertainty through a range of useful inference queries.
    We empirically show how probabilistic circuits can be used as an augmented representation for structure learning methods, leading to improvement in both the quality of inferred structures and posterior uncertainty. 
    Experimental results on conditional query answering further demonstrate the practical utility of the representational capacity of \trust{}.
\end{abstract}

\section{Introduction}

Understanding the causal and probabilistic relationship between variables of underlying data-generating processes can be a vital step in many scientific inquiries. Such systems are often represented by causal Bayesian networks (BNs), probabilistic models with structure expressed using a directed acyclic graph (DAG). The basic task of structure learning is to identify the underlying BN from a set of observational data, which, if successful, can provide useful insights about the relationships between random variables and the effects of potential interventions. However, even under strong assumptions such as causal sufficiency and faithfulness, it is typically impossible to identify a single causal DAG from purely observational data. Further, while consistent methods exist for producing a point estimate DAG in the limit of infinite data \citep{chickering2002ges}, in practice, when data is scarce many BNs can fit the data well. It thus becomes vitally important to quantify the uncertainty over causal structures, particularly in safety-critical scenarios.

Bayesian methods for structure learning tackle this problem by defining a prior and likelihood over DAGs, such that the posterior distribution can be used to reason about the uncertainty surrounding the learned causal edges, for instance by performing Bayesian model averaging. Unfortunately, the super-exponential space of DAGs makes both representing and learning such a posterior extremely challenging. A major breakthrough was the introduction of order-based representations \citep{Friedman03Ordermcmc}, in which the state space is reduced to the space of topological orders.

Unfortunately, the number of possible orders is still factorial in the dimension $d$, making it infeasible to represent the posterior as a tabular distribution over orders. Approximate Bayesian structure learning methods have thus mostly sought to approximate the distribution using samples of DAGs or orders \citep{Lorch21Dibs,agrawal2018imap}. However, such sample-based representations have very limited coverage of the posterior, restricting the information they can provide. Consider, for instance, the problem of finding the most probable graph extension, given an arbitrary set of required edges. Given the super-exponential space, even a large sample may not contain even a \emph{single} order consistent with the given set of edges, making answering such a query impossible.

A natural question, therefore, is whether it is possible to more compactly represent distributions over orders (and thus DAGs) while retaining the ability to perform useful inference queries tractably (in the size of the representation). We answer in the affirmative, by proposing a novel representation, OrderSPNs, for distributions over orders and graphs. Under the assumption of order-modularity, we show that OrderSPNs form a natural and flexible approximation to the target distribution. The key component is the encoding of hierarchical conditional independencies into the form of a sum-product network (SPN) \citep{Poon11Spn}, a well-known type of tractable probabilistic circuit. 
Based on this, we develop an approximate Bayesian structure learning framework, \trust{}, for efficiently querying OrderSPNs and learning them from data. Empirical results corroborate the increased representational capacity and coverage of \trust{}, while also demonstrating improved performance compared to competing methods on standard metrics.
Our contributions are as follows:
\begin{itemize}
    \item We introduce a novel representation, OrderSPNs, for Bayesian structure learning based on sum-product networks. In particular, we exploit exact hierarchical conditional independencies present in order-modular distributions. This allows OrderSPNs to express distributions over a potentially exponentially larger set of orders relative to their size. 
    \item We show that OrderSPNs satisfy desirable properties that enable tractable and exact inference. In particular, we present methods for computation of a range of useful inference queries in the context of structure learning, including marginal and conditional edge probabilities, graph sampling, maximal probability graph completions, and pairwise causal effects. We further provide complexity results for these queries; notably, all take at most linear time in the size of the circuit.
    \item We demonstrate how our method, \trust{}, can be used to approximately learn a posterior over DAG structures given observational data. In particular, we utilize a two-step procedure, in which we (i) propose a structure for the SPN using a seed sampler; and (ii) optimize the parameters of the SPN in a variational inference scheme. Crucially, the tractable properties of the circuit enable the ELBO and its gradients to be computed \textit{exactly} without sampling.
\end{itemize}
\section{Related Work}

Bayesian approaches to structure learning infer a distribution over possible causal graphs. Such distributions can then be queried to extract useful information, such as estimating causal effects, which can aid investigators in understanding the domain, or to plan interventions \citep{castelletti2021bayesian, maathuis2010predicting, Viinikka20Scalable}. Unfortunately, due to the super-exponential space, exact Bayesian inference methods for structure learning do not scale beyond $d=20$ \citep{koivisto2004exact, Kovisto06Advances}. As a result, there has been much interest in approximate methods, most notably performing MCMC sampling over the space of DAGs \citep{madigan1995bayesian, giudici2003improving}. 
Notable works in this direction include those by \citet{Friedman03Ordermcmc}, who operate over the much smaller space and smoother posterior landscape of topological orders, \citet{tsamardinos2006max}, who reduce the state-space by considering conditional independence, and \citet{Kuipers18Efficient}, who reduce the per-step computational cost associated with scoring. 

Alternatively, some recent works have applied variational inference to the Bayesian structure learning problem, where an approximate distribution over graphs is obtained by 
optimizing over some variational family describing distributions over graphs. 
Unfortunately, existing representations are typically not very tractable; \citet{Annadani21Vcn, cundy2021bcd} utilize neural autoregressive and energy-based models respectively, while \citet{Lorch21Dibs} employ sample-based approximations and particle variational inference \citep{Liu16Svgd}. This presents significant challenges for gradient-based optimization, since the reparameterization trick is not applicable for the discrete space of graphs. Further, downstream inference queries can only be estimated approximately through sampling. In contrast, our proposed variational family based on tractable models makes optimization and inference exact and efficient.

Probabilistic circuits \citep{Choi20Probcirc} are a general class of tractable probabilistic models which represent distributions using computational graphs. The key advantage of circuits, compared to other probabilistic models such as Bayesian networks, VAEs \citep{kingma2013auto}, or GANs \citep{Goodfellow14Gan}, is their ability to perform tractable and exact inference, for instance, computing marginal probabilities. While the typical use case is to learn a distribution over a set of variables from data \citep{Gens13Learnspn, Rooshenas14Idspn}, in this work we consider learning a circuit to approximate a given (intractable) posterior distribution over the space of DAGs, thus requiring different structure and parameter learning routines.
\section{Background}

\subsection{Bayesian Structure Learning}

\paragraph{Bayesian Networks} A Bayesian network (BN) $\mathcal{N} = (G, \Theta)$ is a probabilistic model $p(\bm{X})$ over $d$ variables $\bm{X} = \{X_1, ... X_{d}\}$, specified using the directed acyclic graph (DAG) $G$, which encodes conditional independencies in the distribution $p$, and $\Theta$, which parameterizes the mechanisms (conditional probability distributions) constituting the Bayesian network. The conditional probabilities take the form $p(X_i|\text{pa}_{G}(X_i), \Theta_i)$, giving rise to the joint data distribution:
\begin{align*}
    p(\bm{X}|G, \Theta) = \prod_i p(X_i|\text{pa}_{G}(X_i), \Theta_i)
\end{align*}
where $\text{pa}_{G}(X)$ denotes the parents of $X$ in $G$. One of the most popular types of BN model is the linear Gaussian model, under which the distribution is given by the structural equation
$\bm{X} = \bm{X}\weights + \bm{\epsilon} $,
where $\weights \in \mathbb{R}^{d \times d}$ is a matrix of real weights parameterizing the mechanisms, and $\bm{\epsilon} \sim \mathcal{N}(\bias, \noisevariance)$ where $\bias \in \mathbb{R}^d$ and $\noisevariance \in \mathbb{R}_{\geq 0}^{d \times d}$ is a diagonal matrix of noise variances. In particular, for a given DAG $G$, we have $B_{ij} = 0$ for all $i, j$ such that $i$ is not a parent of $j$ in $G$.

Whereas Bayesian networks typically only express probabilistic (conditional independence) information, causal Bayesian networks \citep{spirtes2000causation, Pearl09Causality} are additionally imbued with a causal interpretation, where, intuitively, the directed edges in $G$ represent direct causation. More formally, causal BNs can predict the effect (change in joint distribution) of interventions in the system, where some mechanism is changed, for instance by setting a variable $X$ to some value $x$ independent of its parents.

\paragraph{Bayesian Structure Learning} \textit{Structure learning} \citep{Koller06Pgm,Glymour19Review} is the problem of learning the DAG $G$ of the (causal) Bayesian network responsible for generating some given data $\dataset$.
Typically, strong assumptions are required for structure learning; in this work, we make the common assumption of causal sufficiency, meaning that there are no latent (unobserved) confounders. Even given this assumption, it is often not possible to reliably infer the causal DAG, whether due to limited data, or non-identifiability within a Markov equivalence class. Instead of learning a single DAG, Bayesian approaches to structure learning express uncertainty over structures in a unified fashion, through defining a prior $\prior(\graph)$ and (marginal) likelihood $\likelihood(\dataset|\graph)$ over directed graphs $G$.

A common assumption is that the prior and likelihood scores are \textit{modular}, that is, they decompose into a product of terms for each mechanism $G_i$ of the graph, where $G_i$ specifies the set of parents of variable $i$ in $G$. In such cases, the overall posterior decomposes as:
\begin{align*}
\posterior_G(G|\dataset)  &\propto \mathds{1}_{\isdag{G}} \prior(G) \likelihood(\dataset|G) \\ &= \mathds{1}_{\isdag{G}} \prod_i \prior(G_i) \likelihood(\dataset_i|G_i)
\end{align*}
The acyclicity constraint $\mathds{1}_{\isdag{G}}$ induces correlations between different mechanisms and presents the key computational challenge for posterior inference. The prior and likelihood can be chosen based on knowledge about the domain; for example, for linear Gaussian models, we can employ the BGe score \citep{Kuipers14Bge}, a closed form expression for the marginal likelihood of a variable given its parent set (marginalizing over weights of the linear model). The prior is typically chosen to penalize larger parent sets.

\subsection{Sum-Product Networks}

Sum-product networks (SPN) are probabilistic circuits over a set of variables $\bm{V}$, represented using a rooted DAG consisting of three types of nodes: leaf, sum and product nodes. These nodes can each be viewed as representing a distribution over some subset of variables $\bm{W} \subseteq \bm{V}$, where the root node specifies an overall distribution $q_\spnweight(\bm{V})$. Each leaf node $L$ specifies an input distribution over some subset of variables $\bm{W} \subseteq \bm{V}$, which is assumed to be tractable.
Each product node $P$ multiplies the distributions given by its children, i.e., $P = \prod_{C_i \in ch(P)} C_i$, while each sum node is defined by a weighted sum of its children, i.e., $T = \sum_{C_i \in ch(S)} \spnweight_i C_i$. The weights $\spnweight_i$ for each sum node satisfy $\spnweight_i > 0, \sum \spnweight_i = 1$, and are referred to as the \textit{parameters} of the SPN. The \textit{scope} of a node $N$ denotes the set of variables $N$ specifies a distribution over, and can be defined recursively as follows. Each leaf node $N$ has scope $sc(N) = \{V\}$, where $V$ is the variable it specifies its distribution over, and each product or sum node $N$ has scope $sc(N) = \cup_{C \in ch(N)} sc(C)$.

SPNs provide a computationally convenient representation of probability distributions, enabling efficient and exact inference for many types of queries, given certain structural properties \citep{Poon11Spn, Peharz14Selective}:
\begin{itemize}
    \item A SPN is \textit{complete} if, for every sum node $T$, and any two children $C_1, C_2$ of $T$, it holds that $sc(C_1) = sc(C_2)$. In other words, all the children of $T$, and thus $T$ itself, have the same scope.
    \item  A SPN is \textit{decomposable} if, for each product node $P$, and any two children $C_1, C_2$ of $P$, it holds that $sc(C_1) \cap sc(C_2) = \emptyset$. In other words, the scope of $P$ is partitioned by its children. 
    \item A SPN is \textit{deterministic} if, for each sum node $T$, and any instantiation $\bm{w}$ of its scope $sc(T) = \bm{W}$, at most one of its children $C_i(\bm{w})$ evaluates to a non-zero probability.  
\end{itemize}
Given completeness and decomposability, marginal inference becomes tractable, that is, we can compute $q_\spnweight(\bm{W})$ for any $\bm{W} \subseteq \bm{V}$ in linear time in the number of edges of the SPN. Conditional probabilities can be computed as the ratio of two marginal probabilities. If the SPN additionally satisfies determinism, MPE inference, i.e., $\max_{\bm{v}: \bm{W} = \bm{w}} q_\spnweight(\bm{v})$, also becomes tractable \citep{Peharz17Latent}.

\section{Tractable Representations for Bayesian Structure Learning} \label{sec:tractable}

In this work, we consider Bayesian structure learning over the joint space of topological orders and DAGs, where each order $\order$ is a permutation of $\{1, ..., d\}$. Let $\order^{\orderless i}$ be the set of variables preceding variable $i$ in $\order$. We say that a parent set $G_i$ is consistent with an order $\order$ if $G_i \subseteq \order^{\orderless i}$,
and that graph $G$ is consistent if all of its parent sets are consistent (written $G \models \order$). It follows that any DAG is consistent with at least one order, and further any directed graph consistent with an order must be acyclic. Thus we can specify a joint distribution over orders and DAGs as follows:
\begin{align*}
    \posterior(\order, \graph|\dataset) &\propto \posterior_G(\graph|\dataset) \mathds{1}_{\graph \models \order} \\ &= \prior(G) \likelihood(\dataset|G) \prod_i \mathds{1}_{G_i \subseteq \order^{\orderless i}}
\end{align*}
 Notice that the marginal $\posterior(\graph|\dataset)$ is not the same as $\posterior_G(\graph|\dataset)$, as $\posterior$ will favour graphs which are consistent with more orders. This imparts a bias for learning with respect to $p_G$. On the other hand, the space of orders is much smaller than the space of DAGs, enabling more efficient exploration of the distribution \citep{Friedman03Ordermcmc}.  
 
 In the case where the prior and likelihood are modular, the resulting distribution $p$ is said to be \textit{order-modular}. In this case, $\graphp(G)$ factorizes as $\graphp(G) = \prod_i \graphpi{i}(G_i)$, giving:
 \begin{align} \label{eqn:order-mod}
     p(\order, G) \propto \graphp(G) \mathds{1}_{G \models \order} =  \prod_i \posterior_{G_i}(\graph_i) \mathds{1}_{G_i \subseteq \order^{\orderless i}}
 \end{align}
 where we have omitted the dependence on the dataset and write $p(\order, \graph)$ for the Bayesian posterior.

\begin{figure*}
    \centering
    
\begin{subfigure}{.57\textwidth}
\begin{tikzpicture}[scale=1.0,transform shape,wrap/.style={inner sep=0pt,
fit=#1,transform shape=false}] %
        \node[draw=none, text width=2cm, align=center] (P1) {+};
        \node[draw=none, below=0cm of P1, inner sep=0pt, fill=green, text width=1.5cm, align=center] (P1t1) {{\footnotesize$\emptyset$}};
        \node[draw=none, below=0.1cm of P1t1, inner sep=0pt, fill=cyan, text width=1.5cm, align=center] (P1t2) {{\footnotesize$\{1, 2, 3, 4\}$}};
        
        \node[draw=none, inner sep = 0.05cm, below=0.4cm of P1t2, xshift=-2.5cm] (M1) {$\times$};
        \node[draw=none, inner sep = 0.05cm, below=0.4cm of P1t2, xshift=-0cm] (M2) {$\times$};
        \node[draw=none, inner sep = 0.05cm, below=0.4cm of P1t2, xshift=2.5cm] (M3) {$\times$};

        \node[draw=none, below=0.3cm of M1, inner sep = 0.05cm, xshift=-0.4cm, text width=1cm, align=center] (P11) {+};
        \node[draw=none, below=0.3cm of M1, inner sep = 0.05cm, xshift=0.4cm, text width=1cm, align=center] (P12) {+};
        \node[draw=none, below=0.3cm of M2, inner sep = 0.05cm, xshift=-0.4cm, text width=1cm, align=center] (P21) {+};
        \node[draw=none, below=0.3cm of M2, inner sep = 0.05cm, xshift=0.4cm, text width=1cm, align=center] (P22) {+};
        \node[draw=none, below=0.3cm of M3, inner sep = 0.05cm, xshift=-0.4cm, text width=1cm, align=center] (P31) {+};
        \node[draw=none, below=0.3cm of M3, inner sep = 0.05cm, xshift=0.4cm, text width=1cm, align=center] (P32) {+};
        
        \node[draw=none, below=0.0cm of P11, minimum height=0.32cm, fill=green, inner sep = 0.0cm, xshift=0cm, text width=0.75cm, align=center] (P11t1) {{\footnotesize$\emptyset$}};
        \node[draw=none, below=0.1cm of P11t1, inner sep = 0cm, xshift=0cm, fill=cyan, text width=0.75cm, align=center] (P11t2) {{\footnotesize$\{1, 2\}$}};
        
        \node[draw=none, below=0.0cm of P12, minimum height=0.28cm, inner sep = 0.0cm, xshift=0cm,  fill=green, text width=0.75cm, align=center] (P12t1) {{\footnotesize$\{1, 2\}$}};
        \node[draw=none, below=0.1cm of P12t1, inner sep = 0cm, xshift=0cm, fill=cyan, text width=0.75cm, align=center] (P12t2) {{\footnotesize$\{3, 4\}$}};
        \node[draw=none, below=0.0cm of P21, minimum height=0.32cm, inner sep = 0cm, xshift=0cm,  fill=green, text width=0.75cm, align=center] (P21t1) {{\footnotesize$\emptyset$}};
        \node[draw=none, below=0.1cm of P21t1, inner sep = 0cm, xshift=0cm, fill=cyan, text width=0.75cm, align=center] (P21t2) {{\footnotesize$\{2, 3\}$}};
        \node[draw=none, below=0.0cm of P22, minimum height=0.28cm, inner sep = 0cm, xshift=0cm,  fill=green, text width=0.75cm, align=center] (P22t1) {{\footnotesize$\{2, 3\}$}};
        \node[draw=none, below=0.1cm of P22t1, inner sep = 0cm, xshift=0cm, fill=cyan, text width=0.75cm, align=center] (P22t2) {{\footnotesize$\{1, 4\}$}};
        \node[draw=none, below=0.0cm of P31, minimum height=0.32cm, inner sep = 0cm, xshift=0cm, fill=green, text width=0.75cm, align=center] (P31t1) {{\footnotesize$\emptyset$}};
        \node[draw=none, below=0.1cm of P31t1, inner sep = 0cm, xshift=0cm, fill=cyan, text width=0.75cm,align=center] (P31t2) {{\footnotesize$\{1, 4\}$}};
        \node[draw=none, below=0.0cm of P32, minimum height=0.28cm, inner sep = 0cm, xshift=0cm, fill=green, text width=0.75cm, align=center] (P32t1) {{\footnotesize$\{1, 4\}$}};
        \node[draw=none, below=0.1cm of P32t1, inner sep = 0cm, xshift=0cm, fill=cyan, text width=0.75cm, align=center] (P32t2) {{\footnotesize$\{2, 3\}$}};
        
        
        \node[draw=none, inner sep = 0.05cm, below=0.5cm of P11t2, xshift=-2.5cm] (M7) {$\times$};
        \node[draw=none, inner sep = 0.05cm, below=0.5cm of P11t2, xshift=-0.5cm] (M8) {$\times$};
        
        \node[draw=none, inner sep = 0.05cm, below=0.5cm of P12t2, xshift=0.5cm] (M9) {$\times$};
        \node[draw=none, inner sep = 0.05cm, below=0.5cm of P12t2, xshift=2.5cm] (M10) {$\times$};
        
        \node[draw=none, below=0.3cm of M7, inner sep = 0.05cm, xshift=-0.4cm, text width=1cm, align=center] (P71) {L};
        \node[draw=none, below=0.3cm of M7, inner sep = 0.05cm, xshift=0.4cm, text width=1cm, align=center] (P72) {L};
        \node[draw=none, below=0.3cm of M8, inner sep = 0.05cm, xshift=-0.4cm, text width=1cm, align=center] (P81) {L};
        \node[draw=none, below=0.3cm of M8, inner sep = 0.05cm, xshift=0.4cm, text width=1cm, align=center] (P82) {L};
        \node[draw=none, below=0.3cm of M9, inner sep = 0.05cm, xshift=-0.4cm, text width=1cm, align=center] (P91) {L};
        \node[draw=none, below=0.3cm of M9, inner sep = 0.05cm, xshift=0.4cm, text width=1cm, align=center] (P92) {L};
        \node[draw=none, below=0.3cm of M10, inner sep = 0.05cm, xshift=-0.4cm, text width=1cm, align=center] (P101) {L};
        \node[draw=none, below=0.3cm of M10, inner sep = 0.05cm, xshift=0.4cm, text width=1cm, align=center] (P102) {L};
        
        \node[draw=none, below=0.0cm of P71, inner sep = 0.0cm, xshift=0cm, minimum height=0.32cm, fill=green, text width=0.75cm, align=center] (P71t1) {{\footnotesize$\emptyset$}};
        \node[draw=none, below=0.1cm of P71t1, inner sep = 0cm, xshift=0cm, fill=cyan, text width=0.75cm, align=center] (P71t2) {{\footnotesize$\{1\}$}};
        \node[draw=none, below=0.0cm of P72, inner sep = 0.0cm, xshift=0cm, minimum height=0.28cm, fill=green, text width=0.75cm, align=center] (P72t1) {{\footnotesize$\{1\}$}};
        \node[draw=none, below=0.1cm of P72t1, inner sep = 0cm, xshift=0cm, fill=cyan, text width=0.75cm, align=center] (P72t2) {{\footnotesize$\{2\}$}};
        \node[draw=none, below=0.0cm of P81, inner sep = 0.0cm, xshift=0cm, minimum height=0.32cm, fill=green, text width=0.75cm, align=center] (P81t1) {{\footnotesize$\emptyset$}};
        \node[draw=none, below=0.1cm of P81t1, inner sep = 0cm, xshift=0cm, fill=cyan, text width=0.75cm, align=center] (P81t2) {{\footnotesize$\{2\}$}};
        \node[draw=none, below=0.0cm of P82, inner sep = 0.0cm, xshift=0cm, minimum height=0.28cm, fill=green, text width=0.75cm, align=center] (P82t1) {{\footnotesize$\{2\}$}};
        \node[draw=none, below=0.1cm of P82t1, inner sep = 0cm, xshift=0cm, fill=cyan, text width=0.75cm, align=center] (P82t2) {{\footnotesize$\{1\}$}};
        \node[draw=none, below=0.0cm of P91, inner sep = 0.0cm, xshift=-0.06cm, minimum height=0.28cm, fill=green, text width=0.75cm, align=center] (P91t1) {{\footnotesize$\{1, 2\}$}};
        \node[draw=none, below=0.1cm of P91t1, inner sep = 0cm, xshift=0cm, fill=cyan, text width=0.75cm, align=center] (P91t2) {{\footnotesize$\{3\}$}};
        \node[draw=none, below=0.0cm of P92, inner sep = 0.0cm, xshift=0cm, minimum height=0.28cm, fill=green, text width=1cm, align=center] (P92t1) {{\footnotesize$\{1, 2, 3\}$}};
        \node[draw=none, below=0.1cm of P92t1, inner sep = 0cm, xshift=0cm, fill=cyan, text width=0.75cm, align=center] (P92t2) {{\footnotesize$\{4\}$}};
        \node[draw=none, below=0.0cm of P101, inner sep = 0.0cm, xshift=-0.06cm, minimum height=0.28cm, fill=green, text width=0.75cm, align=center] (P101t1) {{\footnotesize$\{1, 2\}$}};
        \node[draw=none, below=0.1cm of P101t1, inner sep = 0cm, xshift=0cm, fill=cyan, text width=0.75cm, align=center] (P101t2) {{\footnotesize$\{4\}$}};
        \node[draw=none, below=0.0cm of P102, inner sep = 0.0cm, xshift=0cm, minimum height=0.28cm, fill=green, text width=1cm, align=center] (P102t1) {{\footnotesize$\{1, 2, 4\}$}};
        \node[draw=none, below=0.1cm of P102t1, inner sep = 0cm, xshift=0cm, fill=cyan, text width=0.75cm, align=center] (P102t2) {{\footnotesize$\{3\}$}};

        \draw[-] (P1t2) -- (M1) node [midway, above, inner sep = 0.02cm] {\scriptsize 0.4};;
        \draw[-] (P1t2) -- (M2) node [midway, left, inner sep = 0.02cm] {\scriptsize 0.15};;
        \draw[-] (P1t2) -- (M3) node [midway, above, inner sep = 0.02cm] {\scriptsize 0.45};;

        \draw[-] (M1) -- (P11);
        \draw[-] (M1) -- (P12);
        \draw[-] (M2) -- (P21);
        \draw[-] (M2) -- (P22);
        \draw[-] (M3) -- (P31);
        \draw[-] (M3) -- (P32);

        \draw[-] (P11t2) -- (M7) node [midway, left] {\scriptsize 0.7};
        \draw[-] (P11t2) -- (M8) node [midway, right] {\scriptsize 0.3};
        \draw[-] (P12t2) -- (M9) node [midway, left] {\scriptsize 0.5};
        \draw[-] (P12t2) -- (M10) node [midway, right] {\scriptsize 0.5};
        
        \draw[-] (M7) -- (P71);
        \draw[-] (M7) -- (P72);
        \draw[-] (M8) -- (P81);
        \draw[-] (M8) -- (P82);
        \draw[-] (M9) -- (P91);
        \draw[-] (M9) -- (P92);
        \draw[-] (M10) -- (P101);
        \draw[-] (M10) -- (P102);

   \end{tikzpicture}
   \caption{Regular OrderSPN, with expansion factors $\bm{K} = (3, 2)$. Each sum/leaf node is labelled with its associated $($\hlc[green]{$S_1$}, \hlc[cyan]{$S_2$}$)$. Only one expansion beyond the first level is shown for clarity.}
       
    \end{subfigure}
    \begin{subfigure}{.03\textwidth}
        \;
    \end{subfigure}
    \begin{subfigure}{0.31\textwidth}
\scalebox{0.77}{
\begin{tabular}{@{}lll@{}}
\toprule
$(S_1, S_2)$                               & \makecell{Example \\ Orders} & \makecell{Example\\ Graph} \\ \midrule
$($\hlc[green]{$\emptyset$}, \hlc[cyan]{$\{1, 2, 3, 4\}$}$)$ &  \makecell{\textbf{(1, 2, 4, 3)} \\ (2, 3, 1, 4) \\ (4, 1, 3, 2)}      &  
\adjustbox{valign=m}{
\begin{tikzpicture}[scale=0.85,transform shape,wrap/.style={inner sep=0pt,
fit=####1,transform shape=false}] %
        \node[obs] (1) {1} ; %
        \node[obs, xshift=2cm] (2) {2} ;
        \node[obs, below=of 1] (3) {3} ;
        \node[obs, below=of 2] (4) {4} ; %

        \draw[->] (1) -- (2);
        \draw[->] (4) -- (3);
        \draw[->] (2) -- (3);
        \draw[->] (1) -- (4);
        \draw[->] (2) -- (4);
   \end{tikzpicture}
   }
\\ \midrule
$($\hlc[green]{$\{1, 2\}$}, \hlc[cyan]{$\{3, 4\}$}$)$                       &  \makecell{(3, 4) \\ \textbf{(4, 3)}}      &   
\adjustbox{valign=m}{
\begin{tikzpicture}[scale=0.85,transform shape,wrap/.style={inner sep=0pt,
fit=####1,transform shape=false}] %
        \node[latent] (1) {1} ; %
        \node[latent, xshift=2cm] (2) {2} ;
        \node[obs, below=of 1] (3) {3} ;
        \node[obs, below=of 2] (4) {4} ; %

        \draw[->] (4) -- (3);
        \draw[->] (2) -- (3);
        \draw[->] (1) -- (4);
        \draw[->] (2) -- (4);
   \end{tikzpicture}
  }
\\ \midrule
$($\hlc[green]{$\{1, 2\}$}, \hlc[cyan]{$\{4\}$}$)$                       &  \makecell{\textbf{(4)}}      &   
\adjustbox{valign=m}{
\begin{tikzpicture}[scale=0.85,transform shape,wrap/.style={inner sep=0pt,
fit=####1,transform shape=false}] %
        \node[latent] (1) {1} ; %
        \node[latent, xshift=2cm] (2) {2} ;
        \node[obs, below=of 2] (4) {4} ; %

        \draw[->] (1) -- (4);
        \draw[->] (2) -- (4);
   \end{tikzpicture}
   }
\\ \bottomrule
\end{tabular}
}
\caption{Example orders and graphs for 3 sum/leaf nodes. Graphs only include parent sets of $S_2$ (filled) variables.}
    \end{subfigure}
    
    \caption{Example of regular OrderSPN for $d=4$. Best viewed in color.}
    \label{fig:spn_example}
  \end{figure*}

\subsection{Hierarchical CIs}

Unfortunately, the representation of the order-modular distribution in Equation \ref{eqn:order-mod} is not \textit{tractable}: we cannot easily sample from it, nor can we efficiently deduce, for instance, the marginal probability of a given edge. Our goal is thus to obtain a representation approximating this distribution which does possess tractable properties. The key idea is that by exploiting exact conditional independences (CIs) in the distribution, we can \textit{hierarchically} break the approximation of the original distribution into smaller subproblems.

To illustrate this, we first define some notation. 
Given any variable subset $S \subseteq \bnvariables$, let $\order_{S}$ denote an ordering (permutation) over variables in $S$, and $G_{S} \triangleq \{G_i: i \in S\}$ denote the set of parent sets for each variable $i$ in $S$.

Now, suppose we partition the set of BN variables $\{1, ..., d\}$ into two subsets $(S_1, S_2)$, and consider conditioning on the event that all variables in $S_1$ come before $S_2$ in the ordering, that is, the order partitions as $\order = (\order_{S_1}, \order_{S_2})$. In this case, the conditional distribution can be written as:
\begin{align*}
p(\order &, G |  \order = (\order_{S_1}, \order_{S_2})) 
\propto \prod_i \graphpi{i}(G_i) \mathds{1}_{G_i \subseteq (\order_{S_1}, \order_{S_2})^{\orderless i}}  \\
&= \prod_{i\in S_1} \graphpi{i}(G_i) \mathds{1}_{G_i \subseteq \order_{S_1}^{\orderless i}} 
\;\;\prod_{i\in S_2} \graphpi{i}(G_i) \mathds{1}_{G_i \subseteq S_1 \cup \order_{S_2}^{\orderless i} } 
\end{align*}
Notice that the distribution has factorized into two terms, which respectively include only $(\order_{S_1}, G_{S_1})$ and $(\order_{S_2}, G_{S_2})$. In fact, if we define the following (unnormalized) distribution over $(\order_{S_2}, G_{S_2})$:
$$\tilde{p}_{S_1, S_2}(\order_{S_2}, G_{S_2}) \triangleq \prod_{i\in S_2} \graphpi{i}(G_i) \mathds{1}_{G_i \subseteq S_1 \cup \order_{S_2}^{\orderless i} } $$
the previous factorization can be written as:
\begin{align*}
    \posterior(\order &, G| \order = (\order_{S_1}, \order_{S_2})) \\ &\propto \tilde{p}_{\emptyset, S_1}(\order_{S_1}, G_{S_1}) \tilde{p}_{S_1, S_2}(\order_{S_2}, G_{S_2})
\end{align*}

Thus, conditional on $\order = (\order_{S_1}, \order_{S_2})$, we have split the distribution over $(\order, G)$ into distributions over only $(\order_{S_1}, G_{S_1})$ and $(\order_{S_2}, G_{S_2})$ respectively.

Now, let us consider arbitrary disjoint subsets $S_1, S_2 \subseteq \bnvariables$. Then $\tilde{p}_{S_1, S_2}$ is a distribution over $\order_{S_2}, G_{S_2}$. We can apply a similar method to conditionally decompose $\tilde{p}_{S_1, S_2}(\order_{S_2}, G_{S_2})$ into distributions over $S_{21}, S_{22}$, where $S_{21}, S_{22}$ partition $S_2$, given by the following Proposition:
\begin{restatable}{proposition}{propDecomp}
\label{prop:decomp}
Let $p(\order, \graph) \propto \graphp(G) \mathds{1}_{G \models \order}$ be an order-modular distribution. Suppose that $S_1, S_2$ are any disjoint subsets of the variables $\bnvariables$, and let $(S_{21}, S_{22})$ be a partition of $S_2$. Then the following CI holds:
    \begin{align*}
        \tilde{p}_{S_1, S_2}(& \order_{S_2}, G_{S_2}|\order_{S_2} = (\order_{S_{21}}, \order_{S_{22}})) 
        \\ &\propto \tilde{p}_{S_1, S_{21}}(\order_{S_{21}}, G_{S_{21}}) \tilde{p}_{S_1 \cup S_{21}, S_{22}}(\order_{S_{22}}, G_{S_{22}})
    \end{align*}
\end{restatable}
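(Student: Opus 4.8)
The plan is to prove the factorization at the level of \emph{unnormalized} densities and then read off the conditional independence. First I would observe that conditioning $\tilde{p}_{S_1,S_2}$ on the event $\order_{S_2} = (\order_{S_{21}}, \order_{S_{22}})$ merely restricts its support to those orderings of $S_2$ that place every element of $S_{21}$ before every element of $S_{22}$; on this restricted support the normalization constant does not depend on the remaining free quantities $(\order_{S_{21}}, \order_{S_{22}}, G_{S_{21}}, G_{S_{22}})$. It therefore suffices to prove that, for every such order,
$$\tilde{p}_{S_1,S_2}(\order_{S_2}, G_{S_2}) = \tilde{p}_{S_1, S_{21}}(\order_{S_{21}}, G_{S_{21}})\; \tilde{p}_{S_1 \cup S_{21}, S_{22}}(\order_{S_{22}}, G_{S_{22}}),$$
after which the claimed $\propto$ follows, since conditioning on the event is (up to proportionality) multiplication by its indicator.

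Next I would unfold the definition $\tilde{p}_{S_1,S_2}(\order_{S_2}, G_{S_2}) = \prod_{i \in S_2} \graphpi{i}(G_i)\, \mathds{1}_{G_i \subseteq S_1 \cup \order_{S_2}^{\orderless i}}$ and split the product over $i \in S_2$ into the block $i \in S_{21}$ and the block $i \in S_{22}$, which is legitimate since $(S_{21}, S_{22})$ partitions $S_2$ and each factor $\graphpi{i}(G_i)$ depends only on the parent set $G_i$ (modularity), not on the order. The crucial combinatorial observation is how the predecessor set decomposes under the fixed event: for $i \in S_{21}$ no element of $S_{22}$ precedes $i$, so $\order_{S_2}^{\orderless i} = \order_{S_{21}}^{\orderless i}$; for $i \in S_{22}$ all of $S_{21}$ precedes $i$, together with $i$'s predecessors inside $\order_{S_{22}}$, so $\order_{S_2}^{\orderless i} = S_{21} \cup \order_{S_{22}}^{\orderless i}$. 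Substituting these into the indicators, the $S_{21}$-block becomes $\prod_{i \in S_{21}} \graphpi{i}(G_i)\, \mathds{1}_{G_i \subseteq S_1 \cup \order_{S_{21}}^{\orderless i}} = \tilde{p}_{S_1, S_{21}}(\order_{S_{21}}, G_{S_{21}})$, and the $S_{22}$-block becomes $\prod_{i \in S_{22}} \graphpi{i}(G_i)\, \mathds{1}_{G_i \subseteq (S_1 \cup S_{21}) \cup \order_{S_{22}}^{\orderless i}} = \tilde{p}_{S_1 \cup S_{21}, S_{22}}(\order_{S_{22}}, G_{S_{22}})$, which is exactly the desired product.

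I do not anticipate a genuine obstacle: the argument is essentially the same decomposition already carried out in the text to pass from $\posterior(\order, \graph)$ to $\tilde{p}_{\emptyset, S_1} \cdot \tilde{p}_{S_1, S_2}$, now applied with the ambient set $\bnvariables$ replaced by $S_2$ and the empty ``prefix context'' replaced by $S_1$. The one place that needs care is the bookkeeping of predecessor sets --- verifying that under $\order_{S_2} = (\order_{S_{21}}, \order_{S_{22}})$ the predecessors of an $S_{22}$-element within $\order_{S_2}$ are precisely all of $S_{21}$ together with its $\order_{S_{22}}$-predecessors, and that no $S_{22}$-element ever enters the predecessor set of an $S_{21}$-element --- and confirming that this is what makes the two indicator blocks separate cleanly. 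Framing the result explicitly as an instance of the earlier decomposition would both shorten the write-up and make the conditional-independence reading transparent.
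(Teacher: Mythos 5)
Your proposal is correct and follows essentially the same route as the paper's own proof: condition on the event (which, up to proportionality, restricts to consistent orders), split the product over $i \in S_2$ into the $S_{21}$ and $S_{22}$ blocks, and use the decomposition of the predecessor sets $\order_{S_2}^{\orderless i}$ under the event to identify the two blocks with $\tilde{p}_{S_1, S_{21}}$ and $\tilde{p}_{S_1 \cup S_{21}, S_{22}}$. The extra care you take in spelling out the predecessor-set bookkeeping is exactly the step the paper performs implicitly in its single chain of equalities.
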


These conditional independencies suggest an approximation strategy: select $K$ partitions $(S_1, S_2)$ of $\{1, ..., d\}$ to form the approximation and, conditional on a partition, then independently approximate the resulting distributions $\tilde{p}_{\emptyset, S_1}(\order_{S_1}, G_{S_1}), \tilde{p}_{S_1, S_2}(\order_{S_2}, G_{S_2})$, which are simpler problems of dimensions $|S_1|, |S_2|$, respectively. Using Proposition \ref{prop:decomp}, this can be done recursively, until we obtain distributions where $S_2$ is a singleton $\{i\}$, where:
\begin{align*}
    \tilde{p}_{S_1,\{i\}}(\order_{\{i\}}, G_i) &= \graphpi{i}(G_i) \mathds{1}_{G_i \subseteq S_1 \cup \order_{\{i\}}^{\orderless i}}  
    \\ &= \graphpi{i}(G_i) \mathds{1}_{G_i \subseteq S_1}
\end{align*}

\subsection{OrderSPNs}

The decomposition process can be viewed as a rooted tree, where we alternate between nodes that select partitions, and those which decompose the conditional distribution. This naturally induces a sum-product network structure, which we formalize in the following definition:
\begin{restatable}{definition}{defOSPN}
    An OrderSPN $q_{\spnweight}$ is a sum-product network over $(\order, G)$ with the following structure:
    
    \begin{itemize}
    
    \item Each leaf node $L$ is associated with $(S_1, \{i\})$, for some subset $S_1$ of $\bnvariables$ and $i \notin S_1$, and has scope $sc(L) = (\order_{\{i\}}, G_i)$. In addition, the leaf node distribution must have support only over graphs $G_i \subseteq S_1$.
     
     \item Each sum node $T$ is associated with two disjoint subsets $(S_1, S_2)$ of $\{1, ..., d\}$, where $|S_2| > 1$, and has scope $sc(T) = (\order_{S_2 }, G_{S_2})$. It has $K_T$ children and weights $\spnweight_{T, i}$ for $i = 1, ..., K_T$, where the i$^{\text{th}}$ child is a product node $P$ associated with $(S_1, S_{21, i}, S_{22, i})$ for some partition $(S_{21, i}, S_{22, i})$ of $S_2$.
    
    \item Each product node $P$ is associated with three disjoint subsets $(S_1, S_{21}, S_{22})$ of $\{1, ... d\}$, and has scope $sc(P) = (\order_{S_{21} \cup S_{22}}, G_{S_{21} \cup S_{22}})$, where $\order_{S_{21} \cup S_{22}}$ takes the form $(\order_{S_{21}},  \order_{S_{22}})$. It has two children, where the first child is associated with $(S_1, S_{21})$, and the second with $(S_1 \cup S_{21}, S_{22})$. These children are either sum-nodes or leaves.
    \end{itemize}
\end{restatable}

We can interpret each sum (or leaf) node $T$ associated with $(S_1, S_2)$ as representing a distribution over DAGs over variables $S_2$, where these variables can additionally have parents from among $S_1$. In other words, every sum node represents a (smaller) Bayesian structure learning problem over a set of variables $S_2$ and a set of potential confounders $S_1$.

In practice, we organize the SPN into alternating layers of sum and product nodes, starting with the root sum node. In the $j^{\text{th}}$ sum layer, we create a fixed number $K_j$ of children for each sum node $T$ in the layer. Further, for each child $i$ of each sum node $T$, we choose $(S_{21, i}, S_{22, i})$ such that $|S_{21, i}| = \lfloor \frac{|S_2|}{2} \rfloor, |S_{22, i}| = \lceil \frac{|S_2|}{2} \rceil$, and further require that the partitions are distinct for different children $i$ of $T$. Under these conditions, the OrderSPN will have $\lceil \log_{2}(d) \rceil$ sum (and product) layers. This ensures compactness of the representation, and enables efficient tensorized computation over layers.  We call such OrderSPNs \textit{regular}, and the associated list $\bm{K}$ of numbers of children for each layer are called the \textit{expansion factors}. An example of a regular OrderSPN is shown in Figure \ref{fig:spn_example}. At the top sum layer, we create a child for $K_1=3$ different partitions of $\{1, 2, 3, 4\}$ into equally sized subsets, each of which has an associated weight. Sum and product layers alternate until we reach the leaf nodes.

The leaf nodes $L$ represent distributions over some column of the graph: if $L$ is associated with $(S_1, i)$, then it expresses a distribution over the parents $G_i$ of variable $i$. The interpretation of $S_1$ is that this distribution should only have support over sets $G_i \subseteq S_1$. This restriction ensures that OrderSPNs are consistent, in the sense that they represent distributions over valid ($\order, \graph$) pairs (in particular, all graphs are acyclic): 

\begin{restatable}{proposition}{propOSPNacyc} \label{prop:OSPNacyc}
Let $q_\spnweight$ be an OrderSPN. Then, for all pairs $(\order, G)$ in the support of an OrderSPN, it holds that $G \models \order$. 
\end{restatable}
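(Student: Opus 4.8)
The plan is to prove, by structural induction over the nodes of the OrderSPN $q_\spnweight$ proceeding from the leaves up to the root, the following strengthened invariant: for every node $N$ associated with a pair $(S_1, S_2)$ --- with $S_2 = \{i\}$ when $N$ is a leaf --- and for every $(\order_{S_2}, G_{S_2})$ in the support of the distribution represented by $N$, it holds that $G_i \subseteq S_1 \cup \order_{S_2}^{\orderless i}$ for all $i \in S_2$. This is exactly the support condition appearing in the definition of $\tilde{p}_{S_1, S_2}$, and once it is established, applying it to the root sum node --- which is associated with $(\emptyset, \bnvariables)$ --- immediately gives $G_i \subseteq \order^{\orderless i}$ for all $i$, i.e. $G \models \order$, which is the claim.

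For the \emph{base case} (a leaf node $L$ associated with $(S_1, \{i\})$): since $\order_{\{i\}}$ is the trivial one-element order, $\order_{\{i\}}^{\orderless i} = \emptyset$, so the invariant reduces to $G_i \subseteq S_1$, which is precisely the support restriction imposed on leaves in the definition of an OrderSPN. For a \emph{product node} $P$ associated with $(S_1, S_{21}, S_{22})$: by decomposability the scopes $(\order_{S_{21}}, G_{S_{21}})$ and $(\order_{S_{22}}, G_{S_{22}})$ of its two children are disjoint, so any point in the support of $P$ restricts to a point in the support of each child; moreover the scope order $\order_{S_{21} \cup S_{22}}$ has the form $(\order_{S_{21}}, \order_{S_{22}})$, meaning every element of $S_{21}$ precedes every element of $S_{22}$. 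Applying the inductive hypothesis to the first child (associated with $(S_1, S_{21})$) and to the second child (associated with $(S_1 \cup S_{21}, S_{22})$), and using the predecessor-set identities $\order_{S_2}^{\orderless i} = \order_{S_{21}}^{\orderless i}$ for $i \in S_{21}$ and $\order_{S_2}^{\orderless i} = S_{21} \cup \order_{S_{22}}^{\orderless i}$ for $i \in S_{22}$, one obtains $G_i \subseteq S_1 \cup \order_{S_2}^{\orderless i}$ uniformly for all $i \in S_2 = S_{21} \cup S_{22}$.

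For a \emph{sum node} $T$ associated with $(S_1, S_2)$: since all sum weights $\spnweight_{T,i}$ are strictly positive, the support of $T$ is the union of the supports of its product-node children, each associated with some $(S_1, S_{21,i}, S_{22,i})$ where $(S_{21,i}, S_{22,i})$ partitions $S_2$; any point in the support of $T$ lies in the support of some such child, to which the invariant just established for product nodes applies, yielding the claim for $T$ with the same $(S_1, S_2)$. This completes the induction, and the proposition follows by specializing to the root.

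The main obstacle --- indeed essentially the only step requiring care --- is the predecessor-set bookkeeping for product nodes: one must verify that the shifted confounder parameter $S_1 \cup S_{21}$ of the second child combines with that child's internal predecessor set $\order_{S_{22}}^{\orderless i}$ to reconstruct exactly $S_1 \cup \order_{S_2}^{\orderless i}$, which is what makes the two inductive hypotheses glue together into the invariant for $P$. Everything else (support of a product equals the product of the children's supports, support of a sum with positive weights equals the union, the root is associated with $(\emptyset, \bnvariables)$) is immediate from the definitions of the preceding section.
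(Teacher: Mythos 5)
Your proof is correct, but it is organized differently from the paper's. You run a bottom-up structural induction over the nodes of the circuit, carrying the strengthened invariant that every support point of a node associated with $(S_1,S_2)$ satisfies $G_i \subseteq S_1 \cup \order_{S_2}^{\orderless i}$ for all $i \in S_2$; the product-node step is exactly the predecessor-set bookkeeping you flag ($\order_{S_2}^{\orderless i} = \order_{S_{21}}^{\orderless i}$ for $i \in S_{21}$ and $\order_{S_2}^{\orderless i} = S_{21}\cup\order_{S_{22}}^{\orderless i}$ for $i \in S_{22}$), the sum-node step uses positivity of the weights, and the root $(\emptyset,\bnvariables)$ gives $G \models \order$. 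The paper instead decomposes $q_\spnweight$ into complete subcircuits (pick one child per sum node, all children per product node), observes that each subcircuit determines a single order $\order^C$, and then does a top-down induction along each root-to-leaf path showing that $S_{1,i}$ precedes $S_{2,i}$ in $\order^C$, concluding at the leaves from $G_i \subseteq S_1$. The two arguments rest on the same local facts at product nodes and leaves, but your version is a more standard structural induction that never needs the subcircuit machinery or the claim that the order is uniquely determined within a subcircuit, and it keeps the whole argument local to each node's support; the paper's route buys the useful side intuition that every complete subcircuit corresponds to one concrete order, a picture it reuses when discussing determinism and the counting of orders in the compactness result. Your appeal to the root being associated with $(\emptyset,\bnvariables)$ is no less justified than in the paper: it follows from the root's scope being all of $(\order, G)$ together with disjointness of $S_1$ and $S_2$.
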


By design, (regular) OrderSPNs satisfy the standard SPN properties that make then an efficient representation for inference, which we show in the following Proposition. In the following sections, we use these properties for query computation, as well as for learning the SPN parameters.

\begin{restatable}{proposition}{propOSPNcdd} \label{prop:OSPNcdd}
Any OrderSPN is complete and decomposable, and regular OrderSPNs are additionally deterministic. 
\end{restatable}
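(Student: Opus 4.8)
The plan is to verify each of the three properties directly from the structural Definition of an OrderSPN, by a structural induction from the leaves upward that simultaneously identifies the recursively-defined SPN scope of a node with the subsets attached to it. Concretely, I would first show: a leaf associated with $(S_1,\{i\})$ has scope $(\order_{\{i\}}, G_i)$ by definition; a product node associated with $(S_1, S_{21}, S_{22})$ has, by the induction hypothesis on its two children (associated with $(S_1,S_{21})$ and $(S_1\cup S_{21}, S_{22})$, whose "second-set" parts are $S_{21}$ and $S_{22}$), scope equal to $(\order_{S_{21}}, G_{S_{21}}) \cup (\order_{S_{22}}, G_{S_{22}})$, which since $S_{21}\cap S_{22}=\emptyset$ is $(\order_{S_{21}\cup S_{22}}, G_{S_{21}\cup S_{22}})$; and a sum node associated with $(S_1,S_2)$ takes the union of its children's scopes, each of which is $(\order_{S_2}, G_{S_2})$ by the product-node case, because every child's partition $(S_{21,i}, S_{22,i})$ satisfies $S_{21,i}\cup S_{22,i}=S_2$. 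This last computation is precisely completeness: all children of any sum node $T$ share the common scope $(\order_{S_2}, G_{S_2})$.

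For decomposability I would take an arbitrary product node associated with $(S_1, S_{21}, S_{22})$. By the scope identification, its two children have scopes $(\order_{S_{21}}, G_{S_{21}})$ and $(\order_{S_{22}}, G_{S_{22}})$. Since $(S_{21}, S_{22})$ is a partition of the sum node's set $S_2$ (and $S_1$ is disjoint from $S_2$), these index sets are disjoint, so the corresponding variable sets are disjoint as required. Here I would be careful to note that the "$S_{21}$ precedes $S_{22}$" fragment of order information lives at the product node itself and is not a variable in either child's scope, so no variable is double-counted.

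For determinism of a regular OrderSPN, I would fix a sum node $T$ associated with $(S_1,S_2)$ and an instantiation $\bm{w}$ of its scope $(\order_{S_2}, G_{S_2})$, writing $\tau$ for the value of $\order_{S_2}$ (a linear order on $S_2$). In the regular case the children of $T$ are product nodes whose partitions $(S_{21,i}, S_{22,i})$ all have $|S_{21,i}| = \lfloor |S_2|/2 \rfloor$ and are pairwise distinct. A child $P_i$ multiplies factors over $(\order_{S_{21,i}}, G_{S_{21,i}})$ and $(\order_{S_{22,i}}, G_{S_{22,i}})$, and by the way a product combines a sub-order on $S_{21,i}$ and a sub-order on $S_{22,i}$ into an order on $S_2$ with the first block preceding the second, $P_i(\bm w)\neq 0$ forces $\tau$ to place every element of $S_{21,i}$ before every element of $S_{22,i}$ — equivalently, the length-$\lfloor |S_2|/2 \rfloor$ prefix of $\tau$ equals the set $S_{21,i}$. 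Since $\tau$ is fixed that prefix is a single fixed set $A$, and since a partition of $S_2$ with first block of size $\lfloor |S_2|/2\rfloor$ is determined by its first block, at most one child $i$ has $S_{21,i}=A$; hence at most one child of $T$ is nonzero on $\bm w$.

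I expect the only genuine subtlety — and the step I would spell out most carefully in a full write-up — to be the scope identification, and relatedly the precise sense in which a product node combines an order on $S_{21}$ and an order on $S_{22}$ into an order on $S_{21}\cup S_{22}$ with $S_{21}$ first (this is implicit in the statement $\order_{S_{21}\cup S_{22}} = (\order_{S_{21}}, \order_{S_{22}})$ in the Definition). Once that bookkeeping is pinned down, completeness and decomposability are immediate, and determinism reduces to the elementary fact that a linear order on $S_2$ has a unique prefix of any prescribed length.
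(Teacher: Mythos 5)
Your proposal is correct and follows essentially the same route as the paper's proof: completeness and decomposability are read off from the scope identification (children of a sum node all have scope $(\order_{S_2}, G_{S_2})$ because each child's partition covers $S_2$; a product node's children have disjoint scopes over $S_{21}$ and $S_{22}$), and determinism comes from the fact that distinct partitions at a sum node yield disjoint supports over orders. Your explicit prefix argument (a fixed order on $S_2$ has a unique prefix of size $\lfloor |S_2|/2\rfloor$, so at most one child's $S_{21,i}$ can match) is a slightly more careful spelling-out of the step the paper states tersely, and correctly pinpoints where regularity is used.
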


\subsection{Leaf Distributions}

Given a leaf node associated with $(S_1, i)$, corresponding to a distribution on $G_i$, the only restriction imposed by the definition is that the distribution has support only on graphs $G_i \subseteq S_1$. Given that we are approximating an order-modular distribution $p(\order, \graph) \propto \prod_i p_{G_i}(G_i) \mathds{1}_{G_i \subseteq \order^{<i}}$, the natural choice of (unnormalized) leaf distributions is $
    p_{S_1,\{i\}}(\order_{\{i\}}, G_i)
    \propto \graphpi{i}(G_i) \mathds{1}_{G_i \subseteq S_1}
$; we provide formal justification for this choice in Proposition \ref{prop:leafjustification} in the Appendix. 

For tractable inference on the overall distribution over $(\order, \graph)$, we require that the leaf distributions can be computed tractably. In particular, we will be interested in three types of tasks: marginal/conditional inference, MPE inference, and (conditional) sampling.
To formalize this, let $a_{i, j}$ be a Boolean variable indicating whether $j \in G_i$, i.e., $j$ is a parent of $i$. Further, let $c_i$ be any logical conjunction of the corresponding positive or negative literals, i.e. $a_{i, j}$ or $\neg a_{i, j}$. For instance, $c_i = a_{i, 0} \wedge a_{i, 1} \wedge \neg a_{i, 2}$ represents the event that $0, 1$ are parents of $i$, but not $2$. Then, the task of marginal inference is to evaluate the probability $p_{S_1,\{i\}}(c_i = 1)$. Conditional inference is the task of $p_{S_1,\{i\}}(c_i = 1| c'_i = 1)$ for two conjunctions $c_i, c'_i$. MPE inference is $\max_{G_i} p_{S_1,\{i\}}(G_i|c_i = 1)$, while conditional sampling is the task of sampling from $p_{S_1,\{i\}}(G_i|c_i = 1)$. 

Unfortunately, these inference queries are intractable to compute without further assumptions. Following previous work \citep{Kuipers18Efficient, Viinikka20Scalable}, we limit the parents of each variable $i$ to a fixed set of candidates parents $C_i \subset \{1, ..., d\} \setminus \{i\}$, where the size of $|C_i|$ is chosen to be manageable (around 16). These candidate sets are chosen to maximize the coverage of the distribution mass. Given this, we approximate $\tilde{p}_{S_1,\{i\}}(G_i) \approx  p_{G_i}(G_i) \mathds{1}_{G_i \subseteq S_1 \cap C_i}$.

Given this approximation, we can then perform a precomputation taking $O(3^{|C_i|})$ time and space complexity, after which all of these queries require just a $O(1)$ lookup, except conditional sampling, which takes time $O(|C_i|)$. We provide further details of the method in Appendix \ref{apx:singlenode}.

\subsection{Tractable Queries} \label{sec:queries}

In general, being able to compute inference queries tractably individually for single-node distributions is not sufficient to perform inference on the overall distribution over DAGs. While previous works have tackled this problem by sampling single DAGs or orders, our key insight is that we can leverage the tractable properties of SPNs to hierarchically aggregate over order components. We now characterize the classes of queries that can be computed tractably for OrderSPNs, and their interpretation in the context of structure learning. Below we will write $q_\spnweight(G)$ to denote the marginal of $G$ in $q_\spnweight(\order, G)$, and denote the size of the SPN by $M$. 

\paragraph{Marginal and conditional inference} Let $c_i, c'_i$ be conjunctions over the graph column $G_i$. Then the \textit{marginal inference} problem is to compute $q_{\spnweight}(\bigwedge_{i=1}^{d} c_i)$. This can be interpreted as the probability of any arbitrary combination of edges (direct causal relations) simultaneously being present. It is well known that marginal/conditional inference queries can be computed exactly for a complete and decomposable SPN in linear time in the size of the circuit \citep{Poon11Spn}. Since marginal inference for the individual leaves requires just a constant-time lookup, the overall complexity is $O(M)$.

\paragraph{MPE inference}
MPE inference is the problem of finding the most likely instantiation of the variables, given some evidence. More precisely, we wish to compute $\max_{G} q_{\spnweight}(G|\bigwedge_{i=1}^{d} c_i)$, which allows us to, for instance, find the most likely extension of a partially specified DAG. This is tractable (in linear-time) provided that the SPN is \textit{deterministic} \citep{Choi17Determinism}. As MPE inference on the individual leaves requires just a constant-time lookup, the overall complexity is once again $O(M)$.

\paragraph{Sampling} 
Unconditional sampling from the OrderSPN is straightforward and efficient; we traverse the SPN top-down, choosing one child of each sum-node, and all children of each product-node, until we reach the leaf nodes, taking linear time in $d$. Coupled with the cost of sampling the leaf-node distributions, the overall complexity is $O(d\max_i|C_i|)$ per sample. Conditional sampling is more involved, and requires an $O(M)$ bottom-up computation which updates the SPN weights/probabilities according to the evidence, before sampling via top-down traversal \citep{Vergari19SPN}.

\begin{table}[t]
\centering
\begin{tabular}{@{}lll@{}}
\toprule
\textbf{Query}       & \textbf{Time} & \textbf{Space} \\ \midrule
Marginal/Conditional & $O(M)$              &   $O(M)$             \\
MPE                  & $O(M)$              &    $O(M)$            \\
Sampling             & $O(d^2)$              &  $O(d^2)$              \\
Conditional Sampling & $O(d^2 + M)$              &   $O(d^2 + M)$             \\
Pairwise Causal Effects       & $O(d^3M)$    &   $O(d^2M)$        \\ \bottomrule
\end{tabular}
\caption{Per-query complexity for OrderSPNs, for $d$ variables and OrderSPN of size $M$}
\label{tab:complexity}
\end{table}

\paragraph{Causal effects} We now turn to the computation of other types of queries specific to the Bayesian network setting. In the well-studied case of linear Gaussian Bayesian networks, one of the most important quantities for causal inference is pairwise causal effects, first studied by \citet{Wright34path} as the "method of path coefficients". In particular, for a given graph $G$ and weights $\weights$, the causal effect of $X_i$ on $X_j$, written $\causaleffect{i}{j}(\weights)$, is given by summing the weight of all directed paths from $i$ to $j$, where the weight of a path is given by the product of the weights of the edges along that path. Notice that, in cases where $i$ is not an ancestor of $j$, $\causaleffect{i}{j}(\weights) = 0$. 
Now, a priori, when we do not know the graph or weights, the causal effect is a random variable given by:
$$\causaleffect{i}{j} = \sum_{\pi \in F(\{1, ..., d\} \setminus \{i, j\})} \weights_{i,\pi_1} \weights_{\pi_{|\pi|}, j} \prod_{i=1}^{|\pi|-1} \weights_{\pi_i, \pi_{i+1}}$$
where $F(S)$ is the family of all ordered subsets of the variables $S$. From the Bayesian perspective, we would like to employ Bayesian model averaging to estimate the causal effect. This is given by:
$$\bayesiance{i}{j} \triangleq \mathbb{E}_{G \sim q_{\spnweight}(G)} [\mathbb{E}_{\weights \sim q(\weights|G)} [\causaleffect{i}{j}(\weights)] ] $$

While the other queries we have analyzed describe properties of the distribution $q_\spnweight(G)$ over causal graphs, here we are concerned with causal inference \textit{on the domain variables} $\vars$ themselves (induced by $q_\spnweight(G)$). This is a significant distinction for two reasons. Firstly, it is often the case that such quantities are of great practical interest, for instance, to estimate the effect of various types of treatments/interventions on patient outcomes. Secondly, even with full knowledge of the causal graph, inference in Bayesian networks is NP-hard in general, making it unclear how to efficiently transfer knowledge about the distribution over causal graphs to distributions over the domain variables. Fortunately, we find that in the case of linear Gaussian BNs, OrderSPNs possess the appropriate structure to compute Bayesian averaged causal effects efficiently:

\begin{restatable}{proposition}{propCE}
Given an OrderSPN representation $q_\spnweight$ of the distribution over DAGs, the matrix of all pairwise Bayesian averaged causal effects $\bayesiance{i}{j}$ with respect to $q_\spnweight$ can be computed in $O(d^3M)$  time and $O(d^2M)$ space, where $M$ is the size of the SPN. 
\end{restatable}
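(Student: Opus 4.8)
The plan is to exploit the recursive structure of the OrderSPN, together with the path-sum formula for linear-Gaussian causal effects, to propagate a family of matrices bottom-up through the circuit. The key observation is that $\causaleffect{i}{j}(\weights)$ can be written compactly in closed form: if $\weights$ is the weight matrix of a DAG $G$, then $\sum_{\pi} \prod \weights = \big((I - \weights)^{-1} - I\big)_{ij}$, i.e. the causal-effect matrix is $M(\weights) \triangleq (I-\weights)^{-1} - I = \sum_{k\ge 1}\weights^{k}$ (the sum is finite since $\weights$ is nilpotent for a DAG). Taking expectations, we want $\bayesiance{i}{j} = \mathbb{E}_{G\sim q_\spnweight}\mathbb{E}_{\weights\sim q(\weights|G)}[M(\weights)_{ij}]$. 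So the target is the $d\times d$ matrix $\mathbb{E}[M(\weights)]$, and I would define, for each node $N$ of the SPN with scope involving variables $S_2$ (with potential parents $S_1$), a partial version of this expectation restricted to paths that use only edges "produced" at or below $N$.

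The main steps, in order, are as follows. (1) \emph{Leaf nodes.} For a leaf associated with $(S_1,\{i\})$, the distribution $p_{S_1,\{i\}}(G_i)$ together with $q(\weights_{\cdot i}|G_i)$ determines the expected $i$-th column of $\weights$; since only parents in $S_1$ (in practice $S_1\cap C_i$) are possible, this is a vector of length $|S_1|$ computable by the $O(1)$ lookups established in the previous subsection (after the $O(3^{|C_i|})$ precomputation), or at worst $O(d)$ work per leaf. (2) \emph{Product nodes.} A product node associated with $(S_1,S_{21},S_{22})$ has children over disjoint variable blocks $S_{21}$ and $S_{22}$, with the second child allowed parents in $S_1\cup S_{21}$. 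Because the children are over disjoint columns of $\weights$ and are independent under the SPN, the expected weight matrix of the product is obtained by "stacking" the expected columns from the two children — crucially, the $S_{22}$-child already accounts for edges from $S_{21}$ into $S_{22}$. I then need to combine the partial path-sum matrices: paths through the product's scope either lie entirely within one child's block or cross from $S_{21}$ into $S_{22}$, giving a block-triangular composition of the two children's partial $M$-matrices, which is an $O(d^3)$ matrix multiplication. (3) \emph{Sum nodes.} A sum node mixes its product-node children with weights $\spnweight_{T,i}$; since expectation is linear, the node's expected weight matrix and expected partial path-sum matrix are just the $\spnweight$-weighted averages of the children's — here I must be slightly careful that it is the \emph{expected} matrices that average linearly, not anything nonlinear, and that path-sums are linear in the per-edge expectations only because across a product the two blocks are independent (so cross-terms factor). (4) \emph{Root.} At the root, the accumulated matrix is exactly $\mathbb{E}[M(\weights)]$, whose entries are the $\bayesiance{i}{j}$.

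For the complexity bookkeeping: each of the $O(M)$ nodes performs $O(1)$ matrix operations on $d\times d$ matrices (the block composition at product nodes, the weighted sum at sum nodes), giving $O(d^3 M)$ time; storing one $d\times d$ matrix per node gives $O(d^2 M)$ space, matching the claim.

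The main obstacle I anticipate is step (2): making precise exactly which partial quantity to store at each node so that the product-node composition is correct and genuinely linear. The subtlety is that a causal path from $i$ to $j$ may weave back and forth across the recursive partition boundaries at different levels — but in fact, because an OrderSPN respects a topological order consistent with $G$ (Proposition~\ref{prop:OSPNacyc}), every path is "monotone" with respect to the hierarchical partition: once it leaves the $S_{21}$ block for the $S_{22}$ block it never returns. This monotonicity is what makes the block-triangular composition valid and what prevents the nonlinear interaction of edge weights from crossing product boundaries; I would state and prove this monotonicity lemma first, then the rest is the linear-algebraic propagation sketched above. A secondary point to verify is that $q(\weights|G)$ is such that the per-column weight expectations are well-defined and cheap (for the BGe/linear-Gaussian posterior they are available in closed form), and that independence across columns given $G$ — inherited from modularity of the likelihood — is what licenses factoring the expectation of products of weights along a path into a product of expectations.
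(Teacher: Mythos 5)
Your proposal is essentially the paper's own argument: a bottom-up pass over the OrderSPN that maintains, at each node with blocks $(S_1,S_2)$, the matrix of expected causal effects for paths entering and staying in $S_2$, with linear mixing at sum nodes, and at product nodes the ``exit-point''/block-triangular composition (a single $O(d^3)$ matrix product) justified exactly by your monotonicity observation (no path returns from $S_{22}$ to $S_{21}$) together with independence of the two children, yielding the stated $O(d^3M)$ time and $O(d^2M)$ space. The only divergence is at the leaves: the paper computes the inner expectation over weights in closed form (a multivariate $t$-distribution) but approximates the outer expectation over parent sets by sampling, whereas your claim that the expected column follows from the existing $O(1)$ lookups would require an additional weight-weighted precomputation — a minor, fixable detail that does not affect the complexity claim.
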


The factor of $O(d^3)$ is unsurprising and arises from the inference cost of causal effects in linear Gaussian BNs \citep{Koller06Pgm}; the significance is in the linear complexity in the size of the OrderSPN, given that $\bayesiance{i}{j}$ averages over potentially exponentially more DAGs. Intuitively, this is achieved by ``summing-out'' over different causal (directed) paths between variables $i, j$ at each node.  We provide more details and a formal proof in Appendix \ref{apx:queries}.
\section{Structure Learning with OrderSPNs} \label{sec:learning}

In this section we propose a framework for learning OrderSPNs from data. This consists of two components. Firstly, we learn a structure for the OrderSPN, which characterizes the support of the distribution. Then, we optimize the parameters of the SPN using a variational inference scheme.

\subsection{SPN structure learning} \label{sec:structurelearning}

We focus on regular OrderSPNs, where the topology of the SPN is fixed, but for each sum node $T$ in layer $j$ we must choose the $K_j$ partitions $(S_{21, i}, S_{22, i})$ of $S_2$. We define the problem as choosing an oracle $\oracle$ which takes as input some data $\dataset$, disjoint sets $S_1, S_2$, and a number of samples $K$, and returns $K$ partitions $(S_{21, i}, S_{22, i})$ of $S_2$. The goal of the oracle is to maximize coverage of the posterior distribution, i.e. the posterior mass of orders consistent with at least one of the sampled partitions. 
In practice, we can instantiate the oracle with any Bayesian structure learning method that can be modified to produce a DAG over $S_2$, which can additionally have parents from $S_1$. In particular, we adapt two recent Bayesian structure learners, \dibs{} \citep{Lorch21Dibs} and \gadget{} \citep{Viinikka20Scalable}. Given such a method, we can define the oracle by (i) taking $K$ samples of such DAGs; (ii) for each sample, choosing a random ordering consistent with the DAG; and (iii) splitting the ordering into a partition. Each partition $(S_{21, i}, S_{22, i})$ influences the support of the corresponding child of $T$, by restricting that $S_{21}$ comes before $S_{22}$ in the ordering. 

The proposed strategy involves calling the oracle $\oracle$ for each sum-node in the OrderSPN. This improves exploration of the space over the base structure learning method, by recursively exploring subspaces of DAGs over smaller subsets of variables $S_2 \subseteq \{1, ..., d\}$. However, it also appears to introduce a computational challenge since the number of sum-nodes in the SPN could be very large. Thankfully, though each successive sum-layer has $2K_j$ more sum-nodes, the dimension of the DAG space is halved, meaning that the oracle requires much less time. In practice, we ensure efficient implementation by the following methods: (i) we set a time budget appropriately for the oracle in each layer; (ii) for small dimensions $|S_2| \leq d'$ (chosen to be $4$), we avoid the constant-time overhead of each oracle run by instead explicitly enumerating over all partitions.

\subsection{Parameter Learning via Variational Inference}

Given a SPN structure, we now consider the task of learning the parameters of the SPN. We formulate this as a discrete variational inference (VI) problem. Given an unnormalized order-modular distribution $\tilde{p}(\order, G) = \graphp(G) \mathds{1}_{G \models \order}$, the evidence lower bound (ELBO) is given by:
$$\mathbb{E}_{q_\spnweight(\order, G)}[\log \tilde{p}(\order, G)] + H(q_\spnweight(\order, G))$$
where $H(q_\spnweight(\order, G)) = -\mathbb{E}_{q_\spnweight(\order, G)}[\log q_\spnweight(\order, G)]$ is the entropy of the OrderSPN $q$. The goal of VI is then to maximize the ELBO with respect to $\spnweight$. Typically, such discrete VI problems are difficult, since the ELBO requires computing (gradients of) the expectation using high-variance estimators such as REINFORCE (due to the discrete space, the reparameterization trick is not applicable). Fortunately, due to the tractable properties of the SPN, this is not an issue:
\begin{restatable}{proposition}{propELBO}\label{prop:elbo}
The ELBO and its gradients for any regular OrderSPN $q_\spnweight$ and order-modular distribution $p$ can be computed in linear time in the size of the SPN.
\end{restatable}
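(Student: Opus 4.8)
The plan is to show that each of the three terms appearing in the ELBO --- the cross-entropy term $\mathbb{E}_{q_\spnweight}[\log \tilde{p}(\order, G)]$, the entropy term $H(q_\spnweight)$, and their gradients with respect to the sum weights $\spnweight$ --- can be written as a quantity that is computed by a single bottom-up pass over the OrderSPN, and hence takes $O(M)$ time. The key structural facts to exploit, all established earlier in the excerpt, are: (i) $q_\spnweight$ is complete and decomposable (Proposition \ref{prop:OSPNcdd}), so marginals factorize over product nodes and mix over sum nodes; (ii) regular OrderSPNs are \emph{deterministic} (Proposition \ref{prop:OSPNcdd}), which is what will make the entropy tractable; and (iii) $\log \tilde p(\order,G) = \log \graphp(G) = \sum_i \log \graphpi{i}(G_i)$ decomposes additively over the leaf scopes, matching the scope decomposition of the circuit.

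First I would handle the cross-entropy term. Because $\log \tilde p$ decomposes as a sum $\sum_i \log \graphpi{i}(G_i)$ of functions each supported on a single leaf scope $(\order_{\{i\}}, G_i)$, and because the OrderSPN is complete and decomposable, the expectation $\mathbb{E}_{q_\spnweight}[\log \tilde p]$ can be computed by a standard ``moment'' upward pass: at each leaf associated with $(S_1,\{i\})$ we precompute $\mathbb{E}_{p_{S_1,\{i\}}}[\log \graphpi{i}(G_i)]$ (a constant-time lookup given the $O(3^{|C_i|})$ precomputation described in the excerpt, or at worst an $O(2^{|C_i|})$ sum), at each product node we sum the contributions of its two children (since the two child scopes are disjoint and their distributions independent under $q_\spnweight$), and at each sum node $T$ we take the convex combination $\sum_i \spnweight_{T,i}(\cdot)$ of the children's values. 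This is exactly the linear-time ``expectation of a decomposable function'' computation for circuits, so it costs $O(M)$.

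Next, the entropy. Here I would use determinism: for a deterministic sum node, the children have disjoint supports, so the standard circuit entropy recursion applies without any interaction terms. Concretely, define for each node $N$ the quantity $H(N) \triangleq H(q_N)$ where $q_N$ is the subdistribution rooted at $N$; at a leaf it is the entropy of the (precomputed) leaf distribution; at a product node $P$ with children $C_1, C_2$ of disjoint scope, $H(P) = H(C_1) + H(C_2)$ by independence; and at a \emph{deterministic} sum node $T$ with children $\{P_i\}$ and weights $\spnweight_{T,i}$, because the $P_i$ partition the support of $q_T$ we get $H(T) = -\sum_i \spnweight_{T,i}\log \spnweight_{T,i} + \sum_i \spnweight_{T,i} H(P_i)$. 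A single upward pass then yields $H(q_\spnweight)$ in $O(M)$ time. I would make sure to state the determinism dependence clearly, since without it the entropy recursion picks up intractable cross terms between children of a sum node --- this is the one place the ``regular'' hypothesis is genuinely needed, and it is the main subtlety to get right in the write-up.

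Finally, the gradients. Since both the cross-entropy and the entropy have been written as closed-form expressions that are \emph{multilinear} (indeed affine in each $\spnweight_T$ holding the others fixed) functions of the weights computed by an upward pass, their gradients with respect to all weights can be obtained by one additional downward (backpropagation) pass --- i.e.\ reverse-mode automatic differentiation over the same $O(M)$ computation graph --- giving $O(M)$ total. I would note the minor bookkeeping point that the weights at each sum node are constrained to the simplex ($\spnweight_{T,i}>0$, $\sum_i \spnweight_{T,i}=1$), so one either differentiates the Lagrangian or, more practically, reparameterizes via a softmax and differentiates through that; either way the cost is dominated by the $O(M)$ circuit pass. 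The only real obstacle is the entropy/determinism argument of the previous paragraph; the cross-entropy and gradient parts are essentially routine applications of known circuit inference machinery once the additive decomposition of $\log\tilde p$ is observed. A full proof with the explicit recursions will be given in Appendix \ref{apx:queries}.
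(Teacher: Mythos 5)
Your proposal is correct and takes essentially the same route as the paper's proof: the same sum-node recursion justified by determinism (weight entropy plus a weighted sum of child terms), product-node additivity via decomposability, reduction to per-leaf quantities, and gradients via a backward pass over the same $O(M)$ computation graph. The only cosmetic difference is that the paper folds the expectation and entropy into a single per-node ELBO recursion and exploits the specific leaf choice $L(G_i)\propto p_{G_i}(G_i)\mathds{1}_{G_i\subseteq S_1}$, so each leaf's ELBO collapses to a precomputed $\log C$ lookup, whereas you track the two terms separately, which at the leaves requires the extra $O(2^{|C_i|})$-type precomputation you already acknowledge.
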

We provide the proof in Appendix \ref{apx:vi}, which is based on the corresponding result (Thm 1) from \citet{Shih20Vi} for deterministic SPNs. This allows us to learn the SPN parameters using gradient-based optimization. Further, due to the layered structure of regular OrderSPNs, we can leverage tensor learning frameworks and hardware acceleration.

\section{Experiments}
\begin{figure*}[tb]
\centering
\includegraphics[width=\linewidth]{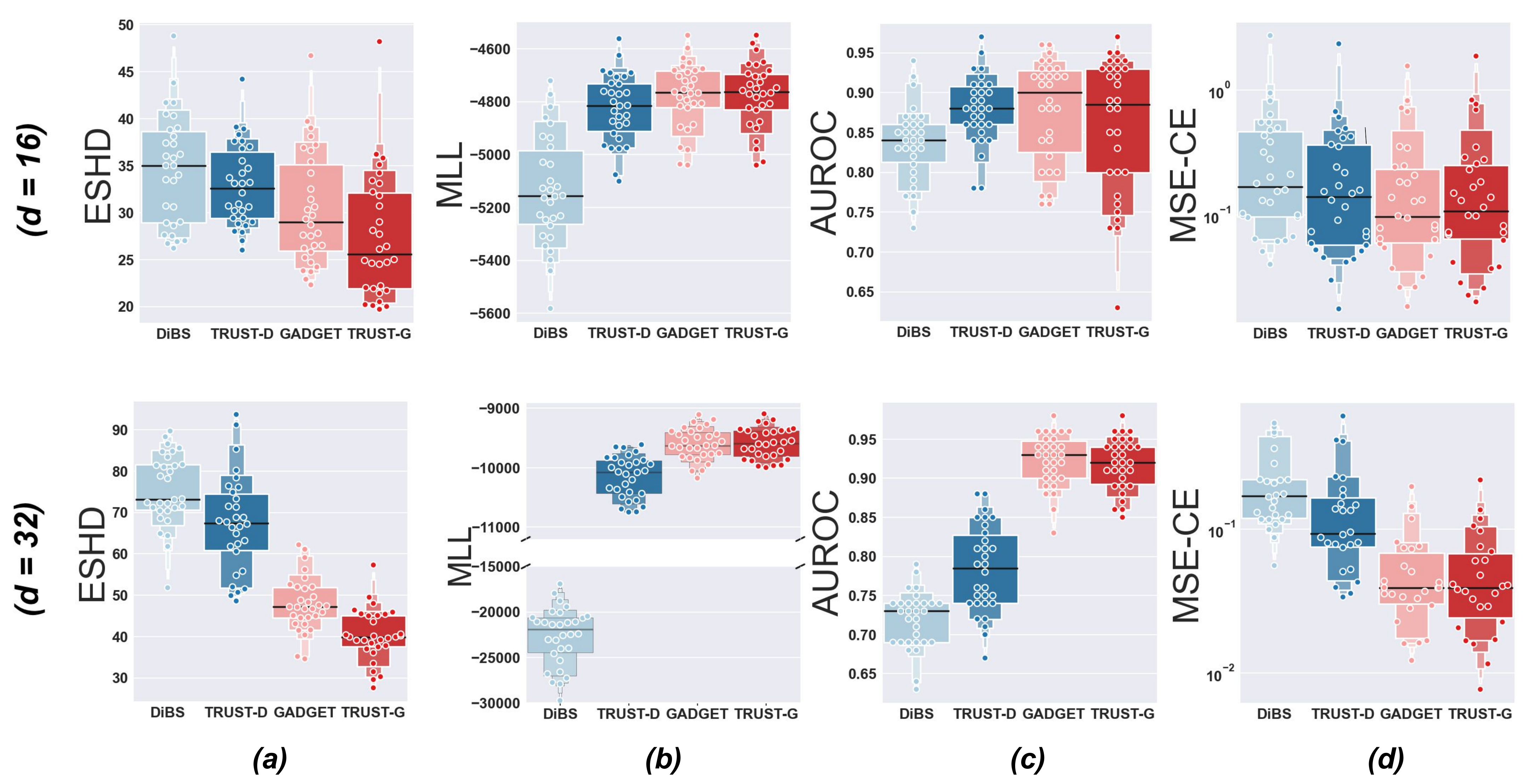}
\caption{Performance evaluation of the \trust{} framework. We find that across all metrics and for both dimensionalities that the \trust{} framework outperforms the seed method, in some instances considerably.  \textbf{Top Row:} Learning structures with $d=16$. \textbf{Bottom Row:} Learning structures with $d=32$. \textbf{(a)} Expected Structural Hamming Distance, lower is better. \textbf{(b)} Marginal Log Likelihood (higher is better). \textbf{(c)} Area Under the Receiver Operator Characteristic curve (higher is better). \textbf{(d)} MSE of Causal Effects (lower is better).}
\label{fig:PerformanceFigure}
\end{figure*}

In this section, we perform an empirical validation of the \trust{} framework.\footnote{Our implementation is available at \url{https://github.com/wangben88/trust}.} 
We implement two state-of-the-art Bayesian structure learning methods, (marginal) \dibs{} \citep{Lorch21Dibs} and \gadget{} \citep{Viinikka20Scalable}, and compare them with their \trust-enhanced counterparts, \trustd{} and \trustg{}, which use the respective method as the oracle for OrderSPN structure learning. 
Each inference method is applied to synthetic structure learning problems, where the ground truth causal structures are Erdős-R\'enyi random graphs with dimension $d \in \{16, 32\}$ and $2d$ expected edges, and the Bayesian network distribution is linear Gaussian. All methods tested employ the BGe marginal likelihood. For each experiment, a dataset $\mathcal{D}^{\text{train}}$ of $N=100$ datapoints is generated for each graph for inference.

\subsection{Learning Performance}

We begin by evaluating the quality of the inferred posterior $q(G|\mathcal{D})$ for each inference method, over a variety of standard metrics. In what follows, we use $G, B$ to denote the true graph/edge weights respectively, and $\mathcal{D}^{\text{test}}$ to denote a held-out dataset of $1000$ datapoints.

The \textit{expected structural Hamming distance} $\text{E-SHD}(q, G)$ measures the expected number of edge changes (SHD) between the essential graphs of $G$ and $G'$, where $G'$ is sampled from the posterior $q$:
\begin{align*}
        \text{E-SHD}(q, G) = \mathbb{E}_{G' \sim q}[\text{SHD(essential}(G'), \text{essential}(G))]
\end{align*}
The \textit{area under the receiver operating characteristic curve} $\text{AUROC}(q, G)$ for Bayesian structure learning \citep{Friedman03Ordermcmc} is computed using marginal edge probabilities $q(G'_{ij} = 1)$ for each potential edge $G'_{ij}$, while varying the confidence threshold to construct the ROC curve.

The \textit{marginal log-likelihood} $\text{MLL}(q, G, \mathcal{D}^{\text{test}})$ measures how well the posterior fits the held-out test data, using the BGe marginal likelihood $p$:
$$\text{MLL}(q, G, \mathcal{D}^{\text{test}})= \mathbb{E}_{G' \sim q}[\log p(\mathcal{D}^{\text{test}}|G')]$$

Finally, the \textit{mean-squared error of causal effects} $\text{MSE-CE}(q, B)$ measures the squared difference between the expected posterior causal effect $BCE_q(i, j)$, and the true causal effect $E_{ij}(B)$ (for variable pair $i, j$). This is then averaged over all (distinct) pairs $i, j$:
$$\text{MSE-CE}(q, B) = \frac{1}{d(d-1)} \sum_{i \neq j} |BCE_q(i, j) - E_{ij}(B)|^2$$

We show the results in Figure \ref{fig:PerformanceFigure} for all methods. \trustd{} and \trustg{} match or outperform their counterparts across all metrics, with especially strong performance on E-SHD, where \trustg{} is best by a clear margin for both $d= 16, 32$. 
Interestingly, we find that both the oracle methods used for SPN structure learning and the subsequent VI parameter learning are important for achieving the best posterior approximation; see Appendix \ref{sec:ablation} for further details.

\subsection{Coverage and Query Answering}

\begin{figure}[tb]
\centering
  \includegraphics[width=\linewidth]{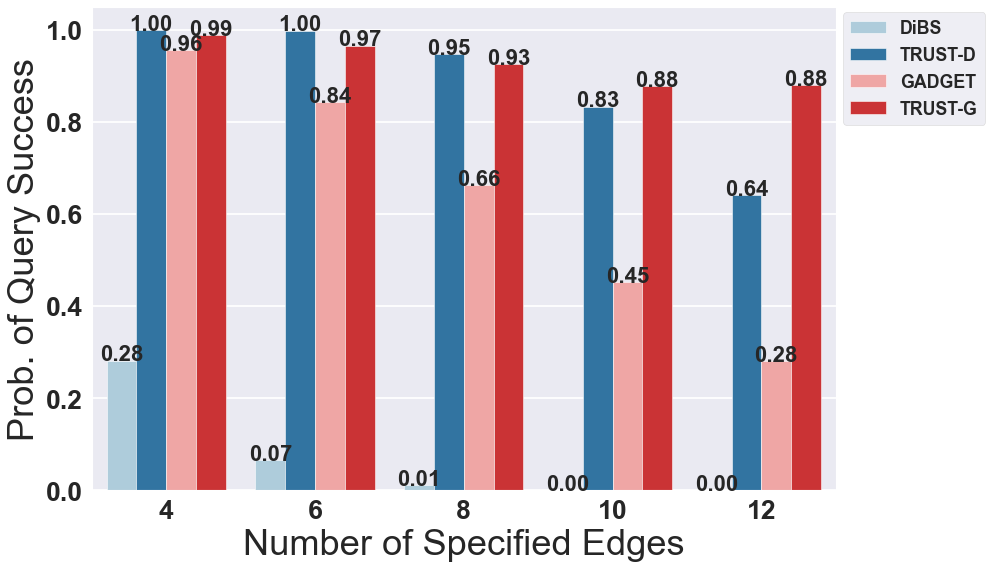}
  \caption{As we specify more edges in our query, the probability that 
  sample-based posteriors (\dibs{} and \gadget{}) have support over the queried edges drops. \trustd{} and \trustg{}, in contrast, maintain much greater coverage.}
  \label{fig:QueryProbs}
\end{figure}

We now compare the query answering capabilities of \trust{} to \dibs{} and \gadget{}, for $d=16$ networks. We set up the task by selecting $n$ edges randomly from the true graph, which we use to form the condition $\bigwedge_{i=1}^{d} c'_i$ in Section \ref{sec:queries} (requiring that all $n$ edges are present). A good representation of the posterior should consistently have posterior mass over this condition. For \dibs{} and \gadget{}, we obtain sample-based approximations $q$ of the posterior, for which we take $30$ and $10000$ samples respectively, as indicated by the respective papers and reference implementations. For \trustd{} and \trustg{} we directly perform the inference queries on the learned OrderSPN.

We begin by considering the marginal probability $q(\bigwedge_{i=1}^{d} c'_i)$. In Figure \ref{fig:QueryProbs}, we compute this over 30 different runs and 50 random edge selections for each run, for different values of $n$, and plot the proportion of times that the probability is non-zero. We see that, as $n$ increases, both methods based on \trust{} consistently outperform their counterparts. 
This demonstrates how \trust{} can be used to augment an oracle method to significantly improve the reliability of posterior coverage. This is particularly noteworthy for \dibs, whose coverage is otherwise limited by its quadratic time complexity in the number of samples.

From a practical perspective, this is especially important for conditional inference. In Table \ref{tab:condq}, we simulate a scenario where we obtain information on the true causal graph after learning. In particular, given $n = 4, 8, 16$ randomly specified edges from the true graph as a condition, we compute conditional probabilities for all unspecified (potential) edges. This can be viewed as "injecting" causal information, which, for instance, could permit distinguishing between DAGs in the same Markov equivalence class where observational data would not suffice. To evaluate, we compute the AUROC given the computed probabilities for each edge. In the case where the representation $q$ has no probability over the condition, we simply take the overall AUROC for the unconditional distribution. Table \ref{tab:condq} shows mean and standard deviation for AUROC over 30 runs for each method. As the number of specified edges increases, we see that the performance of \gadget{} degrades despite the extra information, since the sample-based representation suffers from prohibitively high variance when estimating conditional probability. On the other hand, the greater coverage of \trustg{} ensures that we can take advantage of the extra information, improving the quality of inferences.

\begin{table}[t]
\centering
\begin{tabular}{@{}llll@{}}
\toprule
\multirow{2}{*}{\textbf{No. Edges}} & \multirow{2}{*}{\textbf{Method}} & \multicolumn{1}{c}{\multirow{2}{*}{\textbf{AUROC}}} \\
                                    &                                  & \multicolumn{1}{c}{}                               & \multicolumn{1}{c}{} \\ \midrule
4                                   & \gadget{}                           & $0.905 \pm 0.073$ \\
                                    & \trustg{}                          &  $0.903 \pm 0.057$ \\ \midrule
8                                   & \gadget{}                           &   $0.888 \pm 0.089$ \\
                                    & \trustg{}                          &    $0.933 \pm 0.048$ \\ \midrule
16                                  & \gadget{}                          &     $0.876 \pm 0.081$ \\
                                    & \trustg{}                          &   $0.957 \pm 0.077$  \\ \bottomrule
\end{tabular}
\caption{Quality of inference for conditional queries. Results show  
\trustg{} is significantly better at inferring conditional distributions, especially as the condition becomes more restrictive.}
\label{tab:condq}
\end{table}
\section{Conclusion}

We study the problem of tractable representations in Bayesian structure learning. Such representations are crucial for being able to effectively learn and reason about causal structures with uncertainty. In particular, we introduce OrderSPNs, a new approximate representation of distributions over orders and structures. 
We show that OrderSPNs enable tractable and exact inference over the representation for a variety of important classes of queries, including, remarkably, inference of causal effects for linear Gaussian networks. Our experimental results demonstrate that OrderSPNs can indeed improve upon the representations of state-of-the-art Bayesian structure learning methods, with greater posterior coverage and query answering capabilities.

Our findings illustrate the potential of using tractable probabilistic representations to represent distributions over causal hypotheses. 
We anticipate that such representations could be applicable to a variety of tasks in causal inference, such as designing optimal interventions \citep{Agrawal19Abcd}, though we leave the investigation of these to future work.
Also, while we have chosen to focus on order-modular distributions and SPNs, it is an interesting question whether other types of distributions and tractable representations could be implemented. For instance, probabilistic sentential decision diagrams \citep{Kisa14Psdd} are a type of probabilistic circuit that admit a wider range of queries than SPNs. Such representations could offer alternative tractability properties that make them suitable for differing applications.
\paragraph{Acknowledgements}

We thank the anonymous reviewers for their valuable feedback and suggestions. 
This project was funded by the ERC under the European Union’s Horizon 2020 research and innovation programme (FUN2MODEL, grant agreement No.834115).

\bibliography{citations}
\bibliographystyle{icml2022}

\appendix
\clearpage

\section*{Appendix}

\section{Proofs of OrderSPN results}

In this section, we provide further details on the properties of OrderSPNs. Firstly, we prove Proposition \ref{prop:decomp}, regarding decompositions for order-modular distributions. Then, we provide proofs for Propositions \ref{prop:OSPNacyc} and \ref{prop:OSPNcdd} from the main paper, which show that (regular) OrderSPNs are consistent over orders and graphs, and satisfy the required properties for efficient inference. Finally, we formally characterize the compactness of the OrderSPN representation in a new result.

\subsection{Hierarchical Decomposition}

Recall that in Section \ref{sec:tractable}, we showed that the distribution on orders and graphs $(\order, G)$ could be decomposed into a product of two distributions on $(\order_{S_1}, G_{S_1})$ and $(\order_{S_2}, G_{S_2})$ respectively, conditional on $\order = (\order_{S_1}, \order_{S_2})$, i.e. the event that all variables in $S_1$ come before $S_2$ in the ordering. We now prove the generalization of that result, which allows us to \textit{hierarchically} decompose the distribution, giving rise to the proposed OrderSPN structure.

\propDecomp*
\begin{proof}
    By definition, we have that $\tilde{p}_{S_1, S_2}(\order_{S_2}, G_{S_2}) = \prod_{i\in S_2} \graphpi{i}(G_i) \mathds{1}_{G_i \subseteq S_1 \cup \order_{S_2}^{\orderless i} }$. Conditioning on the event $\order_{S_2} = (\order_{S_{21}}, \order_{S_{22}})$, we have that:
    \begin{align*}
        \tilde{p}_{S_1, S_2}&(\order_{S_2}, G_{S_2}| \order_{S_2} = (\order_{S_{21}}, \order_{S_{22}}))  \\
&\propto \prod_{i\in S_2} \graphpi{i}(G_i) \mathds{1}_{G_i \subseteq S_1 \cup \order_{(S_{21}, S_{22})}^{\orderless i} } \\
&= \prod_{i\in S_{21}} \graphpi{i}(G_i) \mathds{1}_{G_i \subseteq S_1 \cup \order_{S_{21}}^{\orderless i}}  \\
& \;\;\;\;\;\;\; \prod_{i\in S_{22}} \graphpi{i}(G_i) \mathds{1}_{G_i \subseteq S_1 \cup S_{21} \cup \order_{S_{22}}^{\orderless i} }  \\
&= \tilde{p}_{S_1, S_{21}}(\order_{S_{21}}, G_{S_{21}}) \tilde{p}_{S_1 \cup S_{21}, S_{22}}(\order_{S_{22}}, G_{S_{22}}) 
    \end{align*}
    as required.
\end{proof}

\subsection{OrderSPN properties}
 
 \propOSPNacyc*
\begin{proof}
    A complete subcircuit $C$ \citep{Chan06Mpe, Dennis15Structure} is obtained by traversing the circuit top-down and i) selecting one child of every sum-node; ii) selecting all children of every product-node; iii) selecting all leaf nodes reached. By removing (or equivalently, setting to 1) all sum-node weights, $C$ is itself an OrderSPN expressing a distribution over $(\order, G)$. The key point is that the order is determined in any complete subcircuit. At the leaf nodes, the orders $\order_{\{i\}}$ over singletons are trivially deterministic. At the product nodes in the subcircuit, the order is determined by the order specified by the first (left) and second (right) child. That is, for a product node $P$ in the subcircuit associated with $(S_1, S_{21}, S_{22})$, if the left child specifies an order $\order_{S_{21}}$ and the right child an order $\order_{S_{22}}$, then the order for $P$ is determined as $(\order_{S_{21}}, \order_{S_{22}})$. Finally, the sum nodes in the subcircuit only have one child, so the order is determined from its child. Let the uniquely determined order for subcircuit $C$ be denoted $\order^C$.
    
    Now, consider any path from the root node to a leaf node in the subcircuit. Label the sum nodes (and leaf node) reached $T_i$ for $i = 1, ..., m$ (for some $m$), associated with $(S_{1, i}, S_{2, i})$ respectively. 
    We will now show, by induction, that for each sum node $T_i$, it is the case that all variables in $S_{1, i}$ come before $S_{2, i}$ in the ordering $\order^C$. 
    \begin{itemize}
        \item The root is associated with $(S_{1, 1}, S_{2, 1}) = (\emptyset, \{1, ... d\})$, so the condition is trivially satisfied.
        \item Now, given node $T_i$ with $i < m$, by definition $T_i$ has a product node child $P_i$ such that $T_{i+1}$ is either the first or second child of $P_i$. Let $P_i$ be associated with $(S_{1, i}, S_{21, i}, S_{22, i})$. Then, (i) if $T_{i+1}$ is the first child of $P_i$, then $(S_{1, i+1}, S_{2, i+1}) = (S_{1, i}, S_{21, i})$, while (ii) if $T_{i+1}$ is the second child of $P_i$, then $(S_{1, i+1}, S_{2, i+1}) = (S_{1, i} \cup S_{21, i}, S_{22, i})$. Now, $\order_C$ has the property that all nodes in $S_{21, i}$ come before those in $S_{22, i}$. Given the inductive hypothesis that $S_{1, i}$ comes before $S_{2, i}$ in the ordering, in both cases (i) and (ii) we have that all nodes in $S_{1, i+1}$ come before nodes in $S_{2, i+1}$ in the ordering.
    \end{itemize}
    This means that, at any leaf node associated with some $(S_1, \{i\})$, it will be the case that $S_1$ comes before $i$ in $\order^C$. Since the leaf distribution only has support over graphs with $G_i \subseteq S_1$, it follows that all graphs $G$ in the support satisfy $G \models \order^C$.
    
    The overall distribution of the OrderSPN is given by a (weighted) sum of all complete subcircuits, so the result follows.


    
\end{proof}

\propOSPNcdd*
\begin{proof}
    Given any sum node $T$ in the OrderSPN, completeness follows since the $i^{\text{th}}$ product node has scope $(\order_{S_{21, i} \cup S_{22, i}}, G_{S_{21, i} \cup S_{22, i}}) = (\order_{S_2}, G_{S_2})$, as $S_{21, i}, S_{22, i}$ partitions $S_2$ by definition. Decomposability follows immediately from the scopes of the product nodes $P$ and their children, where the variables $(\order_{S_{21} \cup S_{22}}, G_{S_{21} \cup S_{22}}$ are split into sum (or leaf) nodes with scope $(\order_{S_{21}}, G_{S_{21}})$ and $(\order_{S_{22}}, G_{S_{22}})$, where $S_{21}, S_{22}$ are disjoint. Determinism holds for regular OrderSPNs since every sum node has children which split the order into different partitions, so that the children have distinct support over orders (in fact, the choice of child at each sum node can be viewed as determining the order).
\end{proof}

\subsection{Compactness of OrderSPNs}

When organized as a regular OrderSPN, we can further characterize the compactness of the representation. The following result shows that OrderSPNs can be \textit{exponentially} more compact than sample representations of orders:

\begin{restatable}{proposition}{propOSPNsize}
Given a regular OrderSPN $q_\spnweight$ over $d = 2^l$ variables, with $l$ sum (and product) layers and expansion factors $(K_0, ... K_{l-1})$ as above, then we have that:
\begin{itemize}
    \item The size (number of edges) of $q_\spnweight$ is given by:
    $\sum_{i = 1}^{l} (2^i + 2^{i-1}) \prod_{j < i} K_j$
    \item The size (number of orders) of the support of $q_\spnweight$ is given by:
    $\prod_{i = 0}^{l-1} K_i ^ {2^{i}}$
\end{itemize}
\end{restatable}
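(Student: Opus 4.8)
The plan is to prove both formulas by induction on the number of layers $l$, exploiting the recursive, self-similar structure of regular OrderSPNs: the children (via product nodes) of the root sum node are themselves roots of smaller regular OrderSPNs over $d/2 = 2^{l-1}$ variables with expansion factors $(K_1, \ldots, K_{l-1})$. Concretely, a regular OrderSPN with $l \geq 1$ layers has a root sum node $T$ with $K_0$ children, each a product node $P$; each $P$ has two children, and each of those is the root of a regular OrderSPN on $2^{l-1}$ variables. So if $f(l; K_0, \ldots, K_{l-1})$ denotes the quantity of interest (size, or support size), I expect a recurrence of the form $f(l) = (\text{root-layer contribution}) + 2 K_0 \cdot f(l-1; K_1, \ldots, K_{l-1})$ for the size, and $f(l) = K_0 \cdot f(l-1)^2$ for the support.

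For the size (edge count), first I would count edges incident to the root layer: the root sum node $T$ contributes $K_0$ edges to its product-node children, and each of these $K_0$ product nodes contributes $2$ edges to its children, giving $K_0 + 2K_0 = 3 K_0 = (2^1 + 2^0) K_0$ edges at the top layer. Since there are $2K_0$ subcircuits each of which is a regular OrderSPN on $2^{l-1}$ variables with expansion factors $(K_1, \ldots, K_{l-1})$, the inductive hypothesis gives their total size as $2 K_0 \sum_{i=1}^{l-1}(2^i + 2^{i-1})\prod_{1 \le j < i+1, j \ge 1} K_j$; reindexing $i \mapsto i-1$ and absorbing the factor $2K_0 = 2^1 \cdot K_0$ into the product, this becomes $\sum_{i=2}^{l}(2^i + 2^{i-1})\prod_{j<i}K_j$. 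Adding the top-layer count $(2^1+2^0)K_0$, which is the $i=1$ term, yields exactly $\sum_{i=1}^{l}(2^i+2^{i-1})\prod_{j<i}K_j$. The base case $l=1$ (a root sum node with $K_0$ product children, each with two leaf children) has $K_0 + 2K_0 = 3K_0$ edges, matching the formula. I should double-check the indexing conventions (whether leaves count as a "layer" and whether the empty product $\prod_{j<1}K_j = 1$), but this is routine bookkeeping.

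For the support size (number of distinct orders), the key observation is that in a complete subcircuit the order is uniquely determined (as established in the proof of Proposition~\ref{prop:OSPNacyc}), and a choice of order for the whole OrderSPN corresponds bijectively to: a choice of one of the $K_0$ children of the root (i.e., a choice of partition $(S_{21}, S_{22})$ of the $2^l$ variables into two halves), together with an order on the left half and an order on the right half — and these two sub-orders are chosen independently from the supports of the two sub-OrderSPNs, each on $2^{l-1}$ variables. I need to argue this correspondence is a genuine bijection onto the support: the partitions across different root-children are required to be distinct (part of the definition of regular), and since the partition $(S_{21}, S_{22})$ is recoverable from any order in the corresponding child's support (it is the partition of the first half versus second half of the order, and the child fixes which variables go left/right), orders coming from different root-children are distinct. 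Hence $|\mathrm{supp}(q_\spnweight^{(l)})| = K_0 \cdot |\mathrm{supp}(q_\spnweight^{(l-1)})|^2$, and unrolling with base case $|\mathrm{supp}| = K_0^{2^0} = K_0$ at $l=1$ (each of the $K_0$ partitions of two variables gives exactly one order — wait, for $d=2$ each half is a singleton, so there is exactly one order per partition; there are actually only $2$ partitions of a $2$-set into halves, but the formula allows $K_0$ up to that) gives $\prod_{i=0}^{l-1} K_i^{2^i}$ by the standard telescoping $\prod K_i^{2^i}$ that results from squaring repeatedly.

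The main obstacle I anticipate is not the algebra but making the support-counting bijection fully rigorous: one must verify that distinct complete subcircuits either yield the same order (when they differ only in sum-node weights, which are set to $1$) or genuinely distinct orders, and in particular that the independence across the two product-children really does multiply the counts without double-counting or omission. This requires carefully using (i) determinism of regular OrderSPNs (Proposition~\ref{prop:OSPNcdd}), which guarantees the children of each sum node have disjoint supports over orders, and (ii) the distinctness-of-partitions requirement in the definition of regular OrderSPNs. Once that bijective structure is nailed down, both recurrences and their closed forms follow by a straightforward induction, with the edge-count formula additionally needing only careful attention to the reindexing of the product $\prod_{j<i}K_j$ when passing between the $l$-layer and $(l-1)$-layer circuits.
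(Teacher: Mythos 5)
Your proposal is correct and follows essentially the same argument as the paper: both rest on the same two facts (determinism makes the $K_i$ children of a sum node contribute disjointly to the order support, and decomposability makes the two product-node children multiply, i.e.\ square, the count), with the edge count reduced to the same layer-by-layer bookkeeping — the paper counts nodes per layer and uses the tree property, while you phrase it as a recurrence $f(l)=3K_0+2K_0f(l-1)$, which is equivalent. One trivial nit: at $l=1$ the product $\prod_{j<1}K_j$ equals $K_0$ rather than an empty product, but your base-case count $3K_0$ matches the formula regardless.
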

\begin{proof}
    Let $T_i$, $P_i$ be the $i^{\text{th}}$ layer of the OrderSPN, with $|T_i|, |P_i|$ nodes respectively, for $i = 0, ..., l - 1$. We will also write $T_l$ to denote the leaf layer following all of the other layers. Then, by definition, the nodes in the $l^{\text{th}}$ sum layer each have $K_j$ children. Thus, $|P_i| = K_i|T_i|$. Each product node has two children, so we have the relation $|T_{l+1}| = 2|P_l|$. Since the first sum layer $P_1$ consists of just a single root node, $|P_0| = 1$, and it can be easily checked that $|T_i| = 2^{i} \prod_{j < i} K_j$ and $|P_i| = 2^{i} \prod_{j < i+1} K_j$. Thus the total number of nodes is given by:
    \begin{align*}
    \sum_{i = 0}^{l} |T_i| + \sum_{i = 0}^{l-1} |L_i| &= \sum_{i = 0}^{l} 2^{i} \prod_{j < i} K_j + \sum_{i = 0}^{l-1} 2^{i} \prod_{j < i+1} K_j \\
    &= 1 + \sum_{i=1}^{l} (2^{i} + 2^{i-1}) \prod_{j < i} K_j
    \end{align*}
    Note that the structure of an OrderSPN takes the form of a tree, i.e., each node has a unique parent (except the root). Thus, the number of edges in the OrderSPN is equal to the number of nodes, excluding the root.
    Now, each node represents a distribution over orders and graphs restricted to some subset of variables. Let $N(T_i)$ denote the number of distinct orders in the support of the first node in $T_i$ (similar for $N(P_i)$). Notice that, since $d = 2^l$, all nodes in the layer $T_i$ have support over the same number of orders. Thus, we need only consider how the number of orders covered changes as we move through the layers. Firstly note that, for the leaf layer $T_l$, all nodes express distributions over $(\order_{\{i\}}, G_i)$ for some variable $i$. There is only one possible permutation over a singleton set, so $N(T_l) = \order_{\{i\}}$. Then, for any sum-node in $T_i$, by determinism, each child of $T_i$ has disjoint support, so it follows that $N(T_i) = K_i \times N(P_i)$. For any product-node in $P_i$, we have that the two children of $P_i$ express distributions over orders/permutations $\order_{S_{21}}, \order_{S_{22}}$, where $S_{21}, S_{22}$ are disjoint sets. Since each of the children have support over $N(T_{i+1})$ orders, the product node expressing a distribution over $\order_{S_{21} \cup S_{22}}$ has $N(P_i) = N(T_{i+1})^2$. It is worth comparing this to the corresponding relation $|T_{l+1}| = 2|P_l|$ above; the conditional independence asserted by the OrderSPN results in the compactness of the representation. We can now see (by induction) that $N(P_i) = \prod_{j = i+1}^{l - 1} K_j^{2^{j - i + 1}}$ and $N(T_i) = \prod_{j = i}^{l -1} K_j^{2^{j-i}}$, and so the root node has support size:
    \begin{align*}
        N(T_0) = \prod_{j = 0}^{l -1} K_j^{2^{j}}
    \end{align*}
\end{proof}

\section{Computation of leaf distributions} \label{apx:singlenode}

We now explain in detail how to perform marginal, conditional, MPE and sampling inference for leaf distributions. 

Recall that leaf distributions for variable $i$ in an OrderSPN are given by the following density:
$$p_{S_1, \{i\}}(G_i) = \frac{p_{G_i}(G_i) \mathds{1}_{G_i \subseteq S_1}}{\sum_{G_i \subseteq S_1} p_{G_i}(G_i)}$$
where $S_1 \subset C_i$ is the set of potential parents of variable $i$, and where we have explicitly included the normalizing constant. 

As the dimension $d$ increases, this is challenging to compute due to the (exponential) sum over subsets in the normalizing constant. Further, different leaves of the OrderSPNs will in general have different sets $S_1$, due to the conditions on variable ordering imposed by the SPN structure. 

Thus, following previous work \citep{Friedman03Ordermcmc,Kuipers18Efficient}, we globally limit the parents of variable $i$ to a \textit{candidate set} $C_i$. That is, for each leaf node for variable i with distribution $p_{S_1, \{i\}}(G_i)$, we replace $S_1$ with $S_1 \cap C_i$. While this inevitably restricts the coverage of the distribution over DAGs, we can choose the candidate parents $C_i$ in such a way as to preserve as much posterior mass as possible\footnote{\cite{Viinikka20Scalable} studied a number of different strategies for selecting these candidate parents; we use the \textit{Greedy} heuristic, which was found empirically to be most effective.}. As we will shortly see, this enables us to design precomputation schemes that then allow for inference queries on $p_{S_1, \{i\}}(G_i)$ to be answered efficiently for \textit{any} $S_1 \subseteq C_i$.






In contrast to previous work, we are interested not just in the densities/normalizing constants, but also more complex forms of inference. For this, we define $a_{i, j}$ to be the event that $j \in G_i$, i.e. $j$ is a parent of $i$. Then, the key component is to precompute the following function, previously proposed in the Appendix of \citet{Viinikka20Scalable} (for a different purpose):
$$f_i(A_i, A_i') = \sum_{G_i \models \left(\bigwedge_{j \in A_i} a_{i, j} \wedge \bigwedge_{j \in A_i'} \neg a_{i, j}\right)} p_{G_i}(G_i)$$
where $A_i, A_i'$ are disjoint subsets of $C_i$. Intuitively, this is the (unnormalized) probability that all variables in $A_i$ are parents of $i$, and all those in $A_i'$ are not parents of $i$. 

This function can be precomputed in time and space $O(3^{|C_i|})$ as follows. In the base case where $A_i, A_i'$ partition $C_i$, then we simply have:
$$f_i(A_i, A_i') = p_{G_i}(A_i)$$
since $A_i, A_i'$ fully specify the parents of $i$.

In any other case, we have the recurrence:
$$f_i(A_i, A_i') = f_i(A_i \cup \{b\}, A_i') + f_i(A_i, A_i' \cup \{b\})$$
for any $b \in C_i \setminus (A_i \cup A_i')$.  This can be seen from the definition of $f_i$; the RHS corresponds to conditioning on the cases where $b$ either is or is not a parent of $i$. Notice that, for each $A_i, A_i'$, we need just a constant-time addition; thus the overall complexity is given by the number of partitions of $C_i$ into three subsets, i.e. $O(3^{|C_i|})$.

Now, let $c_i$ be any conjunction of (positive or negative) literals of the atoms $\{a_{i, j}: j \in C_i\}$, i.e., a partial specification of which edges can and can't be transformed. We now propose methods for performing marginal, conditional, MPE and sampling inferences:

\begin{itemize}
    \item \textbf{Marginal/Conditional}: For distribution $p_{S_1, \{i\}}$, the marginal for formula $c_i$ is given by:
    \begin{align*}
    p_{S_1, \{i\}}(c_i = 1) &=  \frac{\sum_{G_i \models c_i} p_{G_i}(G_i) \mathds{1}_{G_i \subseteq S_1}}{\sum_{G_i \subseteq S_1} p_{G_i}(G_i)} \\
    &= \frac{\sum_{G_i \models \left(c_i \wedge \bigwedge_{j \in C_i \setminus S_1} \neg a_{i, j} \right)} p_{G_i}(G_i)}{\sum_{G_i \models \bigwedge_{j \in C_i \setminus S_1} \neg a_{i, j}} p_{G_i}(G_i)}
    \end{align*}
    where we have expressed the condition that $G_i \subseteq S_1$ as the logical formula $\bigwedge_{j \in C_i \setminus S_1} \neg a_{i, j}$. Notice that both the numerator and denominator are of the form of the precomputed $f_i$, so we can compute the marginal probability simply by two lookups, i.e. $O(1)$ per query.
    
    Any conditional probability $p_{S_1, \{i\}}(c_i = 1|c_i'=1)$ can be computed from marginals as $p_{S_1, \{i\}}(c_i = 1|c_i'=1) = \frac{p_{S_1, \{i\}}(c_i \wedge c_i'=1)}{p_{S_1, \{i\}}(c_i'=1)}$.
    \item \textbf{MPE}: For distribution $p_{S_1, \{i\}}$, the MPE for formula $c_i$ is given by:
    \begin{align*}
    p_{S_1, \{i\}}(c_i = 1) &=  \max_{G_i} p_{S_1, \{i\}}(G_i|c_i = 1) \\
    &= \max_{G_i \models c_i} p_{S_1, \{i\}}(G_i|c_i = 1) \\
    &= \frac{\max_{G_i \models c_i \wedge \bigwedge_{j \in C_i \setminus S_1} \neg a_{i, j}} p_{G_i}(G_i)}{\sum_{G_i \models c_i \wedge \bigwedge_{j \in C_i \setminus S_1} \neg a_{i, j}} p_{G_i}(G_i)} 
    \end{align*} 
    The maximum is over $G_i$ satisfying a logical conjunction, similarly to how $f_i$ expresses sums over $G_i$ satisfying logical conjunctions. Thus, we propose to precompute another function $f^{\text{max}}_{i}$, which is entirely similar to $f_i$ except that the recurrence is given by:
    \begin{align*}
        &f^{\text{max}}_{i}(A_i, A_i') \\
        &= \max(f^{\text{max}}_{i}(A_i \cup \{b\}, A_i'), f^{\text{max}}_{i}(A_i, A_i' \cup \{b\}))
    \end{align*}
    Analogously to $f_i$, $f^{\text{max}}_{i}$ computes the maximal probability $p_{G_i}(G_i)$ for all $G_i$ satisfying the logical formula. Thus, once this function is precomputed, we can compute any MPE query through a lookup of $f^{\text{max}}_{i}$ and a lookup of $f_i$, i.e. $O(1)$ per query.
    \item \textbf{Sampling}: Given the condition $c_i$, we would like to sample $G_i$ from $p_{S_1, \{i\}}(G_i| c_i = 1)$. Let $B \subseteq C_i$ contain the variables which $c_i$ does not specify (as either definitely being a parent, or definitely not being a parent). 
    
    Then, given any ordering $b_1, ... b_K$ of the elements of $B$, we can sample whether $b_k$ is present sequentially. When sampling $b_k$, let $d^{(k)}_i$ be a conjunction formula representing the sampling of $b_1, ... b_{k-1}$, e.g., $d_i = a_{i, b_1} \wedge \neg a_{i, b_2} \wedge ... \wedge \neg a_{i, b_{k-1}}$. Then we have:
    \begin{align*}
        p_{S_1, \{i\}}(a_{b_k} = 1&|d^{(k)}_i=1, c_i = 1) \\
        &= p_{S_1, \{i\}}(a_{b_k} = 1|d^{(k)}_i \wedge c_i = 1)
    \end{align*}
    This takes the form of a conditional probability, which we can compute in constant time. We must apply this operation $K = O(|C_i|)$ times, which leads to an overall complexity of $O(|C_i|)$ per sampling query.
\end{itemize}




\section{Causal Effect Computation} \label{apx:queries}


    

In this section, we show how to tractably compute Bayesian averaged causal effects with respect to OrderSPN representations. The computation of BCE differs from the other queries, as $\causaleffect{i}{j}$ involves terms which are not localized to a leaf-node distribution; thus standard SPN inference routines are not applicable. Nonetheless, we find that it is possible to compute $\bayesiance{i}{j}$ for all $i, j$ \emph{exactly} with respect to the probabilistic circuit representation over orders and graphs.


\propCE*
\begin{proof}

    Recall that all nodes $t$ in the SPN can be associated with the variable subsets $(S_1, S_2)$, and represent a distribution over the set of edges $G_{S_2}$ (in the case of product nodes, we define $S_2 = S_{21} \cup S_{22})$. Thus, they also define a distribution over causal effects, given by:
    $$\causaleffect{i}{j}^{(t)} = \sum_{\pi \in F(S_2 \setminus \{j\})} \weights_{i,\pi_1} \weights_{\pi_{|\pi|}, j} \prod_{i=1}^{|\pi|-1} \weights_{\pi_i, \pi_{i+1}}$$
    which is defined for any distinct $i \in S_1 \cup S_2$, $j \in S_2$. Notice that this only counts paths which immediately enter (and stay in) $S_2$; thus all edges are in $G_{S_2}$.
    
    By taking the expectation, we can similarly define Bayesian averaged causal effects for node $t$:
    $$\bayesiance{i}{j}^{(t)} \triangleq \mathbb{E}_{G_{S_2} \sim q_{\spnweight}^{(t)}(G_{S_2})} [\mathbb{E}_{\weights \sim q(\weights_{S_2}|G_{S_2})} [\causaleffect{i}{j}^{(t)}(\weights_{S_2})] ] $$
    
    Given this, we now show how it is possible to decompose the computation of $\bayesiance{i}{j}$ according to the structure of the SPN. 
    
    If $t$ is a sum node, with children nodes $t_1, .. , t_k$ and corresponding weights $\spnweight^{(t)}_{1}, ... \spnweight^{(t)}_{C}$ we simply have that:
    \begin{align*}
        &\bayesiance{i}{j}^{(t)} \triangleq \mathbb{E}_{G_{S_2} \sim q_{\spnweight}^{(t)}(G_{S_2})} [\mathbb{E}_{\weights \sim q(\weights_{S_2}|G_{S_2})} [\causaleffect{i}{j}(\weights_{S_2})] ] \\
        &= \sum_{c = 1, ..., C} \spnweight^{(t)}_{c} \mathbb{E}_{G_{S_2} \sim q_{\spnweight}^{(t_c)}(G_{S_2})} [\mathbb{E}_{\weights \sim q(\weights_{S_2}|G_{S_2})} [\causaleffect{i}{j}(\weights_{S_2})] ] \\
        &= \sum_{c = 1, ..., C} \spnweight^{(t)}_{c} \bayesiance{i}{j}^{(t_c)}
    \end{align*}
    where we have used linearity of expectations to bring the sum outside.
    
    If $t$ is instead a product node, then it has two children $t_1, t_2$, which are associated with variable subsets $(S_1, S_{21})$, $(S_1 \cup S_{21}, S_{22})$, respectively. We now consider three separate cases, depending on where $i \in S_1 \cup S_2, j \in S_2$ are located within the subsets.
    \begin{itemize}
        \item If $i \in S_{22}$, $j \in S_{21}$, then $\bayesiance{i}{j}^{(t)} = 0$ since by construction edges (and by extension paths) from $S_{22}$ to $S_{21}$ are disallowed. 
        \item If $i \in S_1 \cup S_{21}$ and $j \in S_{21}$, or alternatively $i \in S_{22}$ and $j \in S_{22}$, then notice that all paths between $i, j$ must stay within $S_{21}$ or $S_{22}$ respectively, since there are no edges from $S_{22}$ to $S_{21}$. Thus, we have that $\causaleffect{i}{j}^{(t)} = \causaleffect{i}{j}^{(t_1)}$ or $\causaleffect{i}{j}^{(t_2)}$ (respectively) and 
        $$\bayesiance{i}{j}^{(t)} = \bayesiance{i}{j}^{(t_1)} \text{ or } \bayesiance{i}{j}^{(t_2)}$$
        \item In the final case, $i \in S_1 \cup S_{21}$ while $j \in S_{22}$. Here we must consider all possible paths between $i$ and $j$. To do so, we will condition on the last variable in $S_1 \cup S_{21}$ (``exit-point'') $k$ along a path. Then we have:
        \begin{align*}
        &\causaleffect{i}{j}^{(t)} = \sum_{\pi \in F(S_2 \setminus \{j\})} \weights_{i,\pi_1} \weights_{\pi_{|\pi|}, j} \prod_{i=1}^{|\pi|-1} \weights_{\pi_i, \pi_{i+1}} \\
        &= \sum_{k \in F(S_{21})}  \\
        & \left(\sum_{\pi \in F(S_{21} \setminus \{k\})}  \weights_{i,\pi_1} \weights_{\pi_{|\pi|}, k} \prod_{i=1}^{|\pi|-1} \weights_{\pi_i, \pi_{i+1}} \right) \\
        & \left(\sum_{\pi \in F(S_{22} \setminus \{j\})} \weights_{k,\pi_1} \weights_{\pi_{|\pi|}, j} \prod_{i=1}^{|\pi|-1} \weights_{\pi_i, \pi_{i+1}} \right) \\
        &= \sum_{k \in F(S_{21})} \causaleffect{i}{k}^{(t_1)} \causaleffect{k}{j}^{(t_2)}
        \end{align*}
        The last equality follows as the two summations are precisely the causal effects $i \to k$ and $k \to j$ for $t_1, t_2$, respectively, which correspond to variable subsets $(S_1, S_{21})$ and $(S_1 \cup S_{21}, S_{22})$.
        Now, by linearity of expectations, and the independence of $\causaleffect{i}{k}^{(t_1)}, \causaleffect{k}{j}^{(t_2)}$, this gives the matrix multiplication:
        $$\bayesiance{i}{j}^{(t)} =  \sum_{k \in F(S_{21} \setminus \{i\})} \bayesiance{i}{k}^{(t_1)} \bayesiance{k}{j}^{(t_2)}$$
    \end{itemize}
    Finally, we consider the leaf nodes $t$ of the SPN, where $|S_2| = 1$ (say, $S_2 = \{j\}$). In such cases, the causal effect reduces to $a_{ij}^{(t)} = B_{i,j}$, and the expectation is given by:
    $$\bayesiance{i}{j}^{(t)} = \mathbb{E}_{G_{j} \sim q_{\spnweight}^{(t)}(G_{j})} [\mathbb{E}_{\weights_j \sim q(\weights_{j}|G_{j})} [\weights_{i, j}] ] $$
    
    Given the graph column $G_j$, the distribution of $\weights_j$ is given by a multivariate $t$-distribution \citep{Viinikka20Scalable}, and so the inner expectation can be computed exactly for a given $G_j$. The outer expectation can be approximated using sampling from the leaf distribution. Though this involves sampling, the crucial aspect of our method is that the expectation through the OrderSPN (and thus through different orders) is exact, unlike \textit{Beeps} \citep{Viinikka20Scalable}, which computes causal effects using sampled DAGs.
    
    At each node $t$ corresponding to variable subsets $(S_1, S_2)$, we must maintain an array $BCE(i, j)^{(t)}$ for $i \in S_1 \cup S_2, j \in S_2$, i.e., of size $(|S_1| + |S_2|) \times |S_2| < d^2$. Computations at any node $t$ take linear time in the number of children (outgoing edges) of the node, except for the matrix multiplication at product nodes, which takes $(|S_1| + |S_{21}|) \times |S_{21}| \times |S_{22}| < d^3$ time. Thus, the overall space and time complexity is $O(d^2M)$ and $O(d^3M)$ respectively.
    
\end{proof}


    

\section{OrderSPN Structure Learning Oracles}

In this section we elaborate further the oracles $\oracle$ used for generating the structure of the OrderSPN in Section \ref{sec:structurelearning}. As previously defined, the $\oracle$ takes as input some data $\dataset$, disjoint sets $S_1, S_2$, and a number of samples $K$, and returns $K$ partitions $(S_{21, i}, S_{22, i})$ of $S_2$. The goal of the oracle is to maximize coverage of the posterior distribution, i.e. the posterior mass of orders consistent with at least one of the sampled partitions. Solving such a problem exactly is clearly intractable; thus we would like heuristic methods which can obtain good coverage. 

A possible oracle would simply be to take $K$ random partitions of $S_2$. However, this does not make efficient usage of the capacity of the OrderSPN. Thus, we consider adapting other Bayesian structure learning methods to take the role of the oracle. This can be done by sampling DAGs from the method; each such DAG naturally induces an order over the variables $S_2$, and thus a partition. Intuitively, we utilize their ability to find promising areas of the space of orders and DAGs to choose a better structure for our SPN. 

The key practical challenge is that, is in contrast to the typical use case, we are not just interested in learning a DAG over a set $S_2$, but also want to allow the variables in $S_2$ to have parents from some disjoint set $S_1$. This will require adaptations specific to the particular method chosen. In the rest of this section, we provide brief descriptions of how this can be done for \dibs{} and \gadget{}.

\dibs{} is a Bayesian structure learning approach based on particle variational inference \cite{Liu16Svgd}. In particular, in the marginal form, it assumes the following latent-variable generative model:
$$p(Z, G, \dataset) = p(Z) p(G|Z) p(\dataset|G)$$
where $Z = [U, V]$ with $U, V \in \mathbb{R}^{k \times d}$ (for some $k < d$) is a latent variable, generating the graph $G \in \{0, 1\}^{d \times d}$ and $\dataset$ is the dataset. In particular, the distribution for the graph takes the form:
\begin{align*}
    p(G|Z) = \prod_{i, j} p(G_{ij}|Z) 
    = \prod_{i, j} \sigma(\bm{u}_{i}^T \bm{v}_{j})
\end{align*}
Now suppose that we want to learn a DAG over $S_2$, where variables can additionally have parents in $S_1$. In this case, the natural generative model is to simply restrict to the components of the graph which are being modelled:
\begin{align*}
    p_{S_1, S_2}(G|Z) = \prod_{i \in {S_1 \cup S_2}} \prod_{j \in S_2} \sigma(\bm{u}_{i}^T \bm{v}_{j})
\end{align*}
The marginal likelihood $p(\dataset|\graph)$ is modular, so we can additionally restrict the likelihood to only concern the likelihood of $S_2$:
$$p_{S_2}(\dataset|\graph) = \prod_{j \in S_2} p(\dataset_j|\graph_j)$$

With these modifications, we have a valid generative model for any $(S_1, S_2)$, to which the \dibs{} particle variational inference scheme can be applied with no further changes, giving us an oracle.

\gadget{} is a MCMC method which samples over the space of \textit{ordered partitions} of the set of variables (note this is distinct from the 2-partitions we use in OrderSPNs). Informally speaking, the ordered partition represents a partial ordering of the variables, where a variable must have a parent from the partition directly preceding its partition. For example, for $d = 6$, a partition might be $(\{3, 4, 5\}, \{1\}, \{2, 6\})$, where variable $1$ must have one of $3, 4, 5$ as parent (but not any of $2, 6$). A $k-$partition $R$ is scored using a modular score:
$$\pi(R) = \prod_{t = 1}^k \prod_{j \in R_t} \tau_j(\cup_{i = 1}^{t-1} {R_i}, R_{t - 1}) $$
where $\tau_j(U, T)$ is the summed score (posterior probability) that variable $j$ has all parents contained in the set $U$, and at least one parent in the set $T$.

As mentioned in Appendix \ref{apx:singlenode}, \gadget{} \citep{Viinikka20Scalable} uses a similar type of precomputation to that used in \trust{} to precompute the functions $\tau_j(U, T)$, where a candidate parent set $C_j$ of each variable is chosen in advance (using a heuristic) so that we actually compute $\tau_j(U \cap C_j, T \cap C_j)$. 

Now, suppose we are given sets $(S_1, S_2)$, and as usual seek to learn DAGs over $S_2$ which additionally have parents from $S_1$. This can be achieved by simply restricting the MCMC to only learn ordered partitions over $S_2$, while also allowing the parent sets of variables $S_2$ to be contained in $S_1 \cup S_2$. In particular, if we have precomputed $\tau_j(U \cap C_j, T \cap C_j)$ for all $j$ and $U, T \subseteq \{1, ..., d\}$, this includes all of the necessary scores $\tau_j(U \cap C_j, T \cap C_j)$ for all $j \in S_2$ and $U, T \subseteq S_1 \cup S_2$, for any restriction $(S_1, S_2)$. 

The MCMC proceeds as if it were over a $|S_2|$ dimensional problem, over the set of variables $S_2$, but with modified scores involving $S_1$ as above, thus providing an oracle for TRUST.



\section{Parameter Learning and Tractable ELBO Computation} \label{apx:vi}

In this section, we provide further details on the variational inference scheme used to learn parameters of the OrderSPN. First, we provide a proof of Proposition \ref{prop:elbo}, which is based on Theorem 1 from \citep{Shih20Vi}. 

\propELBO*

\begin{proof}
We assume an order-modular distribution over the form $\tilde{p}(\order, G) = \prod_i p_{G_i}(G_i) \mathds{1}_{G \models \order}$. Define $\tilde{p}_{S_2}(\order_{S_2}, G_{S_2}) \triangleq \prod_{i \in S_2} p_{G_i}(G_i) \mathds{1}_{G_{S_2} \models \order_{S_2}}$ for any $S_2 \subseteq \bnvariables$. For any node $N$ in the OrderSPN with scope $(\order_{S_2}, G_{S_2})$, we will define the following quantity, which is the evidence lower-bound when using the distribution $N(\order_{S_2}, G_{S_2})$ to approximate $\tilde{p}_{S_2}(\order_{S_2}, G_{S_2})$:
$$ELBO(N) = \mathbb{E}_{N}[\tilde{p}_{S_2}(\order_{S_2}, G_{S_2})] + H(N(\order_{S_2}, G_{S_2}))$$

We now show that the ELBO for $q_\spnweight$ can be computed efficiently (i.e. in linear time in the size of the SPN) as a function of the ELBO of the leaf node distributions.

Let $T$ be a sum node associated with $(S_1, S_2)$, with children $C_1, ..., C_K$ and corresponding weights $\spnweight_{1}, ..., \spnweight_{K}$. We can write the expectation of $\tilde{p}_{S_2}$ and entropy in terms of corresponding quantities of the child distributions:
\begin{align*}
    \mathbb{E}_{T}[\log \tilde{p}_{S_2}(\order_{S_2}, G_{S_2})] =  \sum_{i = 1}^{K} \spnweight_i \mathbb{E}_{C_i}[\log \tilde{p}_{S_2}(\order_{S_2}, G_{S_2})] 
\end{align*}
\begin{align*}
    H(T(\order_{S_2}&, G_{S_2})) = -\mathbb{E}_{T}[\log T(\order_{S_2}, G_{S_2})] \\
    &= - \sum_{i = 1}^{K} \spnweight_i \mathbb{E}_{C_i}[\sum_{j = 1}^{K} \log \spnweight_j C_j(\order_{S_2}, G_{S_2})] \\
    &= - \sum_{i = 1}^{K} \spnweight_i \mathbb{E}_{C_i}[\log \spnweight_i C_i(\order_{S_2}, G_{S_2})] \\
    &= - \sum_{i = 1}^{K} \spnweight_i \log \spnweight_i + \sum_{i = 1}^{K} \spnweight_i \mathbb{E}_{C_i}[-C_i(\order_{S_2}, G_{S_2})] \\
    &= - \sum_{i = 1}^{K} \spnweight_i \log \spnweight_i + \sum_{i = 1}^{K} \spnweight_i H(C_i(\order_{S_2}, G_{S_2})) \\
\end{align*}
\begin{align*}
    &ELBO(T) = \mathbb{E}_{T}[\log \tilde{p}_{S_2}(\order_{S_2}, G_{S_2})] + H(T(\order_{S_2}, G_{S_2})) \\
    &= - \sum_{i = 1}^{K} \spnweight_i \log \spnweight_i \\
    & \;\;\;\; + \sum_{i = 1}^{K} \spnweight_i \left[\mathbb{E}_{C_i}[\log \tilde{p}_{S_2}(\order_{S_2}, G_{S_2})] + H(C_i(\order_{S_2}, G_{S_2}))\right] \\
    &= - \sum_{i = 1}^{K} \spnweight_i \log \spnweight_i + \sum_{i = 1}^{K} \spnweight_i ELBO(C_i)
\end{align*}

In other words, the expectation decomposes as a weighted sum over expectations with respect to the child distributions, and the entropy decomposes as a sum of the entropy of the sum-node weights, and a weighted sum over entropies with respect to the child distributions. Note that the third equality in the derivation of the entropy decomposition holds only due to the fact that OrderSPNs are deterministic; this means that the children $C_i$ have disjoint supports, and thus $\mathbb{E}_{C_i}[C_j(\order, G)] = 0$ for all $i \neq j$. Together, we have that the ELBO of a sum-node can be expressed in terms of the ELBO of its children.

Let $P$ be a product node, associated with $((S_1, S_{21}), (S_{21}, S_{22}))$, with children $C_1, C_2$. $P$ expresses a distribution over $(\order_{S_2}, G_{S_2})$, where $S_2 = S_{21} \cup S_{22}$. Then we have that:
\begin{align*}
&\mathbb{E}_{P}[\log \tilde{p}_{S_2}(\order_{S_2}, G_{S_2})] =  \mathbb{E}_{P}[\log \prod_{i \in S_2} p_{G_i}(G_i) \mathds{1}_{G_{S_2} \models \order_{S_2}}]\\
&= \mathbb{E}_{P}[\log \prod_{i \in S_{21}} p_{G_i}(G_i) \mathds{1}_{G_{S_{21}} \models \order_{S_{21}}} \\
& \;\;\;\;\;\;\;\;\;\; \log \prod_{i \in S_{22}} p_{G_i}(G_i) \mathds{1}_{G_{S_{22}} \models \order_{S_{22}}}] \\
&= \mathbb{E}_{P}[\log \tilde{p}_{S_{21}}(\order_{S_{21}}, G_{S_{21}})] + \mathbb{E}_{P}[\log \tilde{p}_{S_{22}}(\order_{S_{22}}, G_{S_{22}})]
\end{align*}
\begin{align*}
    &H(P(\order_{S_{2}}, G_{S_{2}})) = -\mathbb{E}_{P}[\log P(\order_{S_{2}}, G_{S_{2}})] \\
    &= -\mathbb{E}_{P}[\log C_1(\order_{S_{21}}, G_{S_{21}}) + \log C_2(\order_{S_{22}}, G_{S_{22}})]\\
    &= -\mathbb{E}_{C_1}[\log C_1(\order_{S_{21}}, G_{S_{21}})] - \mathbb{E}_{C_2}[\log  C_2(\order_{S_{22}}, G_{S_{22}})]\\
    &= H(C_1(\order_{S_{21}}, G_{S_{21}})) + H(C_2(\order_{S_{22}}, G_{S_{22}})) 
\end{align*}
\begin{align*}
    &ELBO(P) = \mathbb{E}_{P}[\log \tilde{p}_{S_2}(\order_{S_{2}}, G_{S_{2}})]  + H(P(\order_{S_{2}}, G_{S_{2}})) \\
    &= \mathbb{E}_{P}[\log \tilde{p}_{S_{21}}(\order_{S_{21}}, G_{S_{21}})] + H(C_1(\order_{S_{21}}, G_{S_{21}})) \\
    &\;\;\;\; + \mathbb{E}_{P}[\log \tilde{p}_{S_{22}}(\order_{S_{22}}, G_{S_{22}})] + H(C_2(\order_{S_{22}}, G_{S_{22}})) \\
    &= ELBO(C_1) + ELBO(C_2)
\end{align*}
This follows from decomposability, which ensures that the child distributions are over disjoint sets of variables (and are thus independent).

By recursively applying the above equalities, we can express the ELBO for the overall OrderSPN $q_\phi$ in terms of the SPN weights $\phi$ and ELBO for the leaf node distributions. Since each equality involves a sum/product over the children of the node (i.e., the outgoing edges), the overall computation takes linear time in the size (number of edges) of the SPN.
\end{proof}

\subsection{ELBO for Leaf Node Distributions}
In the above Proposition, we have not mentioned how to compute the ELBO for the leaf node distributions. For a leaf node $L$ associated with $(S_1, i)$, which is a distribution $L(G_i)$ over the parents of variable $i$, we have that:
\begin{align}\label{align:leaf}
    ELBO(L) &= \mathbb{E}_{L}[\log \tilde{p}(\order_{\{i\}}, G_i)] + H(L(\order_{\{i\}}, G_i)) \nonumber \\
    &= \mathbb{E}_{L}[\log p_{G_i}(G_i)] + H(L(G_i))
\end{align}
Recall that, for OrderSPNs, it is required that $L(G_i)$ has support only over $G_i \subseteq S_1$. In the main paper, we chose to set $L(G_i) \propto p_{G_i}(G_i) \mathds{1}_{G_i \subseteq S_1}$. We now provide justification for this choice:
\begin{proposition} \label{prop:leafjustification}
    $L(G_i) \propto p_{G_i}(G_i) \mathds{1}_{G_i \subseteq S_1}$ maximizes (\ref{align:leaf}) subject to the support condition.
\end{proposition}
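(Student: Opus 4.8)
The plan is to recognize the leaf ELBO in (\ref{align:leaf}) as, up to an additive constant, the negative of a Kullback--Leibler divergence, so that the claimed optimality reduces to Gibbs' inequality. Write $\tilde{p}(G_i) \triangleq \tilde{p}_{S_1,\{i\}}(G_i) = p_{G_i}(G_i)\mathds{1}_{G_i \subseteq S_1}$ for the unnormalized target restricted to the admissible set, let $Z \triangleq \sum_{G_i \subseteq S_1} p_{G_i}(G_i)$ be its normalizing constant (assumed positive; otherwise the support constraint is vacuous and the statement is trivial), and set $p^*(G_i) \triangleq \tilde{p}(G_i)/Z$. By construction $p^*$ is exactly the candidate $L(G_i) \propto p_{G_i}(G_i)\mathds{1}_{G_i \subseteq S_1}$, so it suffices to show $p^*$ uniquely maximizes (\ref{align:leaf}).

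First I would dispose of the degenerate cases so that the divergence is well defined. Any feasible $L$ must be supported on $\{G_i : G_i \subseteq S_1\}$; and any $L$ placing positive mass on some $G_i \subseteq S_1$ with $p_{G_i}(G_i) = 0$ gives $\mathbb{E}_{L}[\log p_{G_i}(G_i)] = -\infty$, hence is not optimal since $p^*$ attains a finite value. Thus it suffices to compare among $L$ supported on $\mathrm{supp}(p^*) = \{G_i \subseteq S_1 : p_{G_i}(G_i) > 0\}$. On this set I would then use $H(L(G_i)) = -\mathbb{E}_{L}[\log L(G_i)]$, and the fact (noted just above the statement) that for a leaf node the term $\log \tilde{p}(\order_{\{i\}}, G_i)$ reduces to $\log p_{G_i}(G_i)$, to rewrite
\begin{align*}
\mathbb{E}_{L}[\log p_{G_i}(G_i)] + H(L(G_i))
&= \mathbb{E}_{L}\!\left[\log \frac{p_{G_i}(G_i)\mathds{1}_{G_i \subseteq S_1}}{L(G_i)}\right] \\
&= \log Z - \mathbb{E}_{L}\!\left[\log \frac{L(G_i)}{p^*(G_i)}\right] \\
&= \log Z - \mathrm{KL}\!\left(L \,\big\|\, p^*\right).
\end{align*}
Since $\mathrm{KL}(L \| p^*) \geq 0$ with equality if and only if $L = p^*$, the ELBO is at most $\log Z$, attained uniquely by $L(G_i) = p^*(G_i) \propto p_{G_i}(G_i)\mathds{1}_{G_i \subseteq S_1}$.

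There is essentially no hard step: the computation is a one-line application of Gibbs' inequality. The only point requiring mild care is the bookkeeping around the support constraint — namely restricting attention to distributions that both respect $G_i \subseteq S_1$ and keep the objective finite, so that $\mathrm{KL}(L\|p^*)$ is well defined — which is exactly what the degenerate-case discussion above handles.
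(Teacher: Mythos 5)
Your proof is correct and follows essentially the same route as the paper: rewrite the leaf ELBO as $\log Z$ minus the KL divergence between $L$ and the normalized restriction of $p_{G_i}$ to $\{G_i \subseteq S_1\}$, then invoke non-negativity of the KL divergence. The only difference is your slightly more careful handling of degenerate/zero-mass cases, which the paper glosses over but which does not change the argument.
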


\begin{proof}
The ELBO for a leaf distribution (\ref{align:leaf}) can be written as:
\begin{align*}
    ELBO(L) &= \mathbb{E}_{L}[\log p_{G_i}(G_i)] + H(L(G_i))) \\
    &= \mathbb{E}_{L}[\log p_{G_i}(G_i)] - \mathbb{E}_{L}[\log L(G_i)]\\
    &= -KL(L||p_{G_i})
\end{align*}
where $KL$ is the KL-divergence. Thus, to maximize the ELBO, we need to minimize this KL-divergence. Let $C = \sum_{G_i \subseteq S_1} p_{G_i}(G_i)$. Assuming $L$ satisfies the support condition, this can be written as:
\begin{align*}
    KL(L||p_{G_i}) &= \mathbb{E}_{L} \left[\log \frac{L(G_i)}{p_{G_i}(G_i)}\right] \\
    &= \mathbb{E}_{L} \left[\log \frac{L(G_i)}{p_{G_i}(G_i)}\right] \\
    &= \mathbb{E}_{L} \left[\log \frac{L(G_i)}{p_{G_i}(G_i)/C}\right]  - \log C\\
\end{align*}
This KL-divergence is minimized by $L(G_i) \propto p_{G_i}(G_i) \mathds{1}_{G_i \subseteq S_1}$, as required. In this case, the ELBO is given by:
\begin{align*}
    ELBO(L) &= \log C - KL(L||\frac{p_{G_i}\mathds{1}_{G_i \subseteq S_1}}{C}) \\
    &= \log C
\end{align*}
\end{proof}
We see that, with this choice of $L$, the ELBO is a constant $\log C$ that we can precompute using the methods for computation of leaf distribution described in Appendix \ref{apx:singlenode}. Thus, the computation of ELBO for leaf distributions can be done in an $O(1)$ lookup, and the overall ELBO computation is linear in the size of the OrderSPN (in particular, independent of the dimension).




\section{Experimental Details}

\paragraph{Bayesian network hyperparameters} In our experiments, we consider linear Gaussian Bayesian networks, and generate Erdos-Renyi random structures, with expected numbers of edges given by $2d$. We generate data using fixed observation noise $\sigma^2 = 0.1$, and edge weights drawn independently from $\mathcal{N}(0, 1)$. 

\paragraph{Posterior setup} We use the fair prior over graph structures \citep{Eggeling19Fairprior}, where the prior probability of a mechanism having $k$ edges is proportional to the inverse of the number of different parents sets of size $k$. In addition, we use the BGe marginal likelihood \citep{Kuipers14Bge} with hyperparameters $\alpha_\mu = 1$, $\alpha_w = d + 2$, and $T = \frac{1}{2} I$ where $I$ is the $d \times d$ identity matrix.

\paragraph{Implementation details} Our implementations of \dibs{} and \gadget{} are based on the reference implementations with the default settings of hyperparameters. In particular, we ran \dibs{} with $N=30$ particles and 3000 epochs using the marginal inference method, while \gadget{} was run using 16 coupled chains and for 320000 MCMC iterations, extracting $N = 10000$ samples. 

Our implementation of TRUST uses the PyTorch framework to tensorize passes through the SPN, following the regular OrderSPN structure described in the main paper. In the $d=16,32$ cases, we used expansion factors of $\bm{K} = [64, 16, 6, 2], [32, 8, 2, 6, 2]$ respectively; these were chosen empirically to approximately match oracle computation across layers. Parameter learning in the SPN was performed by optimizing the ELBO objective using the Adam optimizer with learning rate $0.1$ and for $700$ iterations. Operations in the circuit are performed in log-space for numerical stability.

\paragraph{Inference Queries} We perform inference for DiBS and Gadget by applying the appropriate calculation over the sample (for instance, the marginal probability of an edge $G_ij$ is simply the proportion of sampled DAGs in which it appears), while for \trust{}, we perform inference directly on the OrderSPN using the queries described in the paper when this is possible, and by sampling otherwise (e.g. E-SHD).

\section{Ablation Study on OrderSPN Learning} \label{sec:ablation}

In Section \ref{sec:learning}, we proposed to use a two-step procedure for learning OrderSPNs, in which we (i) propose a structure for the OrderSPN using an oracle method; and (ii) further learn the parameters of the OrderSPN via variational inference. We now perform an ablation study to examine the each of these steps and their impact on performance.

We evaluate five different methods:
\begin{itemize}
    \item \textbf{Random} In this case, instead of using an oracle method $\oracle$ to split $S_2$ into a partition $(S_{21, i}, S_{22, i}$, we instead perform this split \textit{randomly} throughout the OrderSPN. We also do not perform any parameter learning, instead setting the parameters at each sum-node in the OrderSPN to be equal (e.g. if a sum-node has 4 children, we set each parameter to 0.25).
    \item \textbf{Parameter Only} We randomly propose the structure as above, but do perform parameter learning using VI.
    \item \textbf{Structure Only} We do perform structure learning using \gadget{} as an oracle, but do not learn parameters.
    \item \textbf{Gadget} As in the main paper.
    \item \textbf{TRUST-G} As in the main paper.
\end{itemize}

The first step of structure learning determines the support of the OrderSPN, i.e. the orders and DAGs to which it assigns positive probability, while the second step of parameter learning aims to optimize the fit to the posterior given the support constraints imposed by the first step. By randomizing one (or both) of these steps, we can see how this affects the approximation. 

The results are shown in Figure \ref{fig:ablation}. As expected, the fully random method performs by far the worst, on all metrics. Both performing parameter learning only and structure learning only provide significant improvements, but interestingly on different metrics. Structure learning only performs quite well on AUROC, while parameter learning only performs comparatively better on E-SHD and MLL (even outperforming \gadget{} on E-SHD). The performance of using parameter learning only is quite remarkable, given that the graphs covered by the OrderSPN were chosen \textit{at random}. We hypothesize that this can be attributed to the compactness and capacity of OrderSPNs as a representation; as a result, even the randomly chosen structure will contain some orders/DAGs which are close to the ground truth DAG. Nonetheless, adding structure learning as well, as in \trustg{}, does provide the best overall performance, and shows that both steps are important to obtain the best possible representation.

\begin{figure*}[htb]
    \centering
    \includegraphics[width=0.473\linewidth]{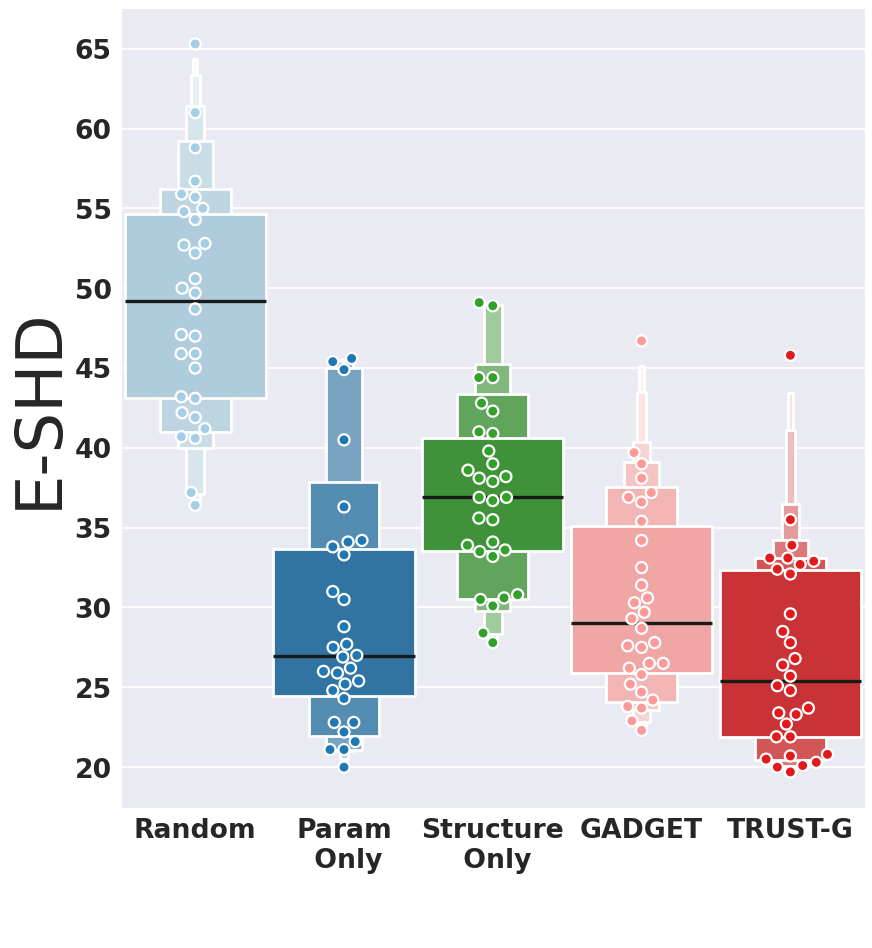}
    \includegraphics[width=0.49\linewidth]{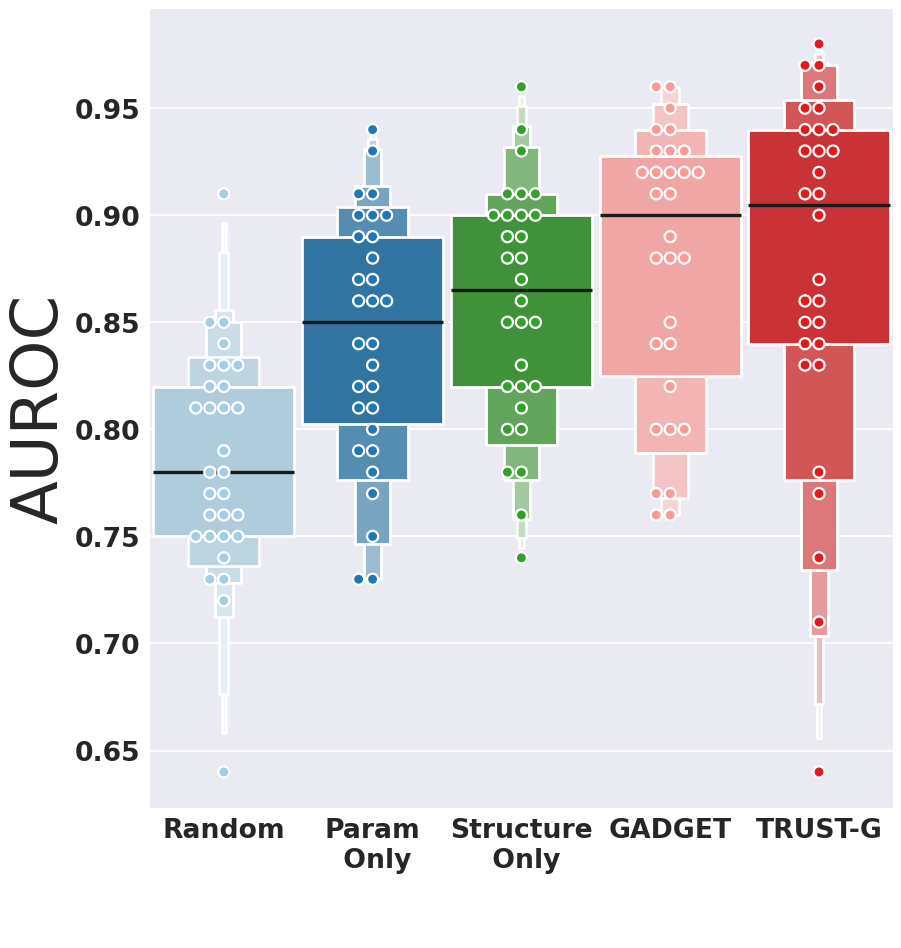}
    \includegraphics[width=0.49\linewidth]{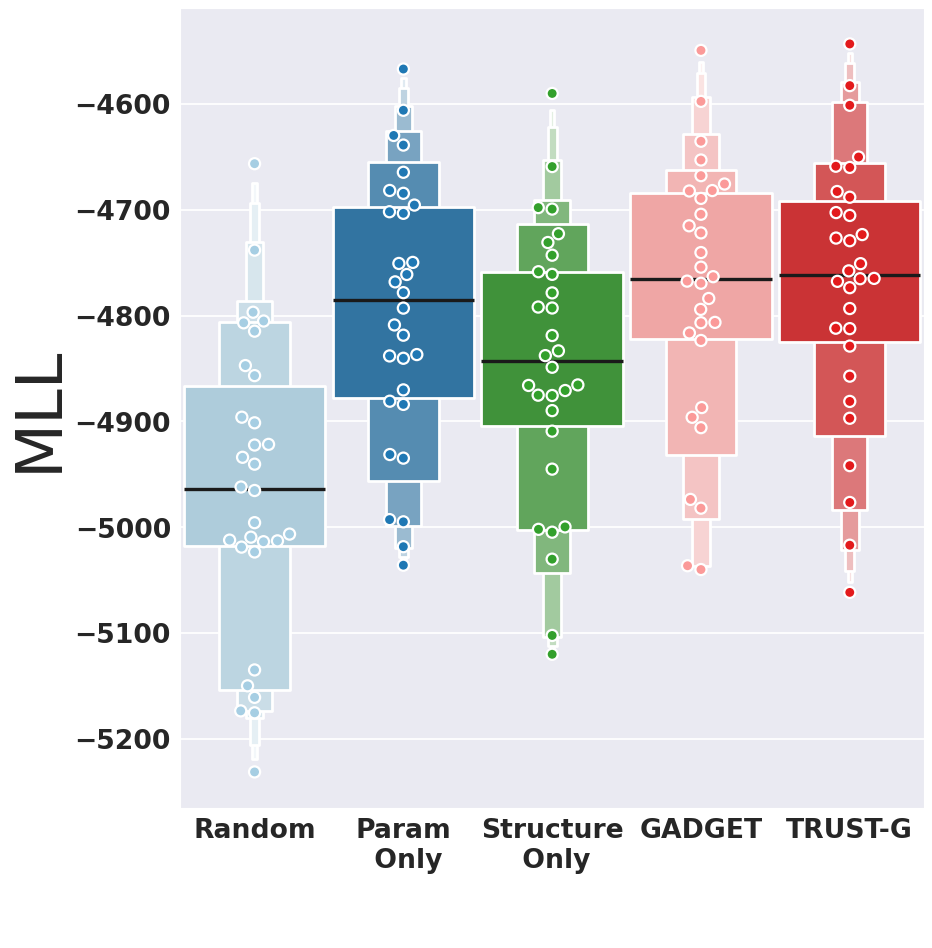}
    \includegraphics[width=0.473\linewidth]{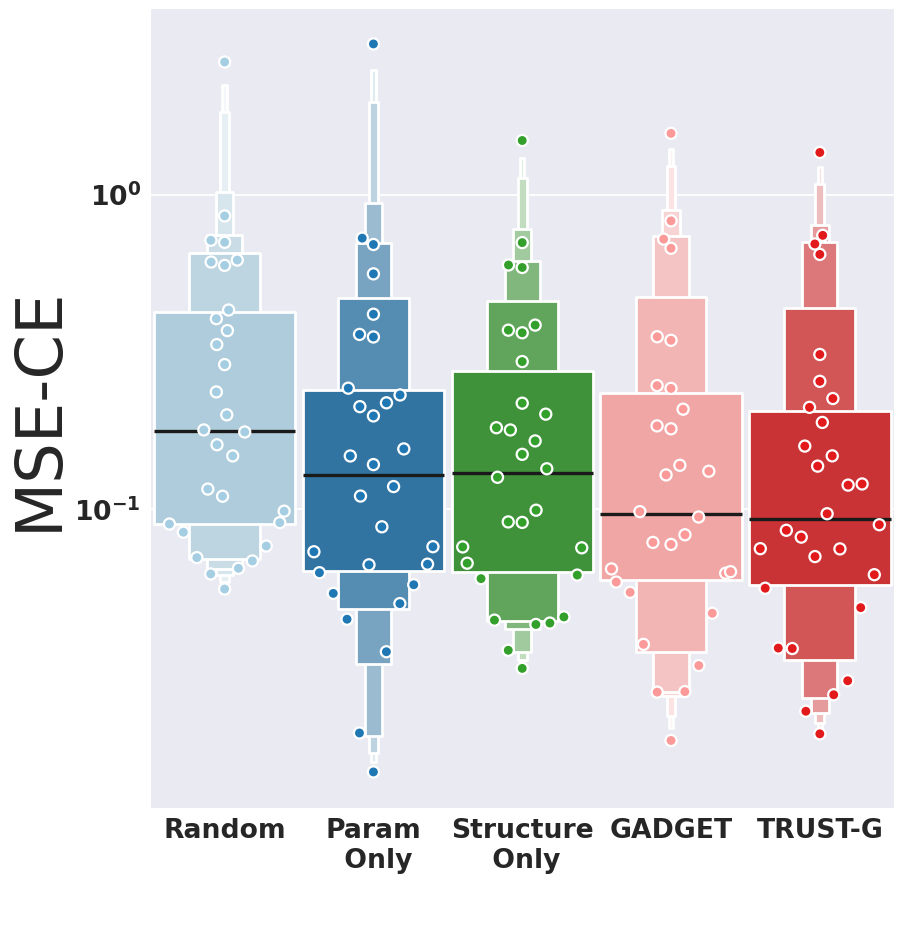}
    \caption{Ablation study evaluating performance of different variants of \trustg{} (and \gadget{}), for $d=16$.}
    \label{fig:ablation}
\end{figure*}

\end{document}